\documentclass{article}

\PassOptionsToPackage{numbers, compress}{natbib}
 \usepackage[preprint]{neurips_2026}

\usepackage[utf8]{inputenc} 
\usepackage[T1]{fontenc}    
\usepackage{hyperref}       
\usepackage{url}            
\usepackage{booktabs}       
\usepackage{amsfonts}       
\usepackage{nicefrac}       
\usepackage{microtype}      
\usepackage{xcolor}         

\usepackage{mathtools}
\usepackage{amssymb}
\usepackage{amsfonts}
\usepackage{amsthm}
\usepackage{thmtools}

\newtheorem{theorem}{Theorem}
\newtheorem{lemma}[theorem]{Lemma}

\newtheorem{assumption}[theorem]{Assumption}
\newtheorem{definition}[theorem]{Definition}

\usepackage{algorithm}
\usepackage{algorithmic}
\usepackage{graphicx}
\usepackage{subcaption}
\usepackage{thm-restate}
\usepackage{multirow}
\usepackage{comment}
\newcommand{\rev}[1]{\textcolor{black}{#1}}
\title{Importance-Guided Basis Selection for \\ Low-Rank Decomposition of Large Language Models}

\author{%
  Daniel Agyei Asante \\
  Iowa State University \\
  \texttt{dasante@iastate.edu} \\
  \And
  Ernie Chang \\
  Meta \\
  \texttt{erniecyc@meta.com} \\
  \And
  Yang Li\thanks{Corresponding author. Address:  2434 Osborn Dr, Ames, IA 50011, United States.} \\
  Iowa State University \\
  \texttt{yangli1@iastate.edu} \\
}

\begin{document}

\maketitle

\begin{abstract}
    
Low-rank decomposition is a compelling approach for compressing large language models, but its effectiveness hinges on selecting which singular-vector bases to retain for a target task. Existing methods such as Basel adapt singular-value coefficients on downstream data and prune bases with small re-learned magnitudes, a heuristic that can be misaligned with task performance because it ignores the local geometry of the loss landscape. We present Basis Selection with Importance (BSI), a principled low-rank compression framework that ranks and prunes bases by directly estimating the expected loss increase incurred when each basis is removed. BSI derives a derivative-based importance score from a second-order Taylor expansion of the task loss with respect to singular values, combining first-order sensitivity and second-order curvature to quantify pruning impact. To make this criterion practical for LLMs, we develop an efficient Hessian-diagonal estimator by adapting the Hutchinson randomized-probing method to loss curvature with symmetric parameter perturbations. We provide a comprehensive theoretical analysis, including loss-increase bounds under basis pruning, explicit propagation of Hessian-diagonal estimation error into these bounds, variance characterization tied to the Hessian spectrum, high-probability sample-complexity guarantees for achieving a target estimation accuracy, and guidance on perturbation intensity. Extensive experiments on mathematical reasoning benchmarks demonstrate that BSI consistently outperforms state-of-the-art low-rank decomposition baselines, with especially strong improvements under deep compression.
\end{abstract}

\section{Introduction}

The remarkable success of large language models (LLMs) has been driven in part by ever-growing parameter counts, but this scaling comes with prohibitive memory footprints and inference latency~\citep{kaplan2020scaling,hoffmann2022training}. Enabling LLM deployment in resource-constrained settings such as mobile and wearable devices therefore motivates an active line of work on model compression. Among existing approaches, low-rank decomposition~\citep{li2025basel,hsu2022fwswd,svd_noach} is particularly appealing due to its strong empirical effectiveness and rigorous mathematical grounding, which leverages the low-dimensional structure often present in neural network weights and activations. By approximating large matrices as the product of smaller low-rank factors, low-rank decomposition methods substantially reduce parameter count and computation while largely preserving model performance.

Standard low-rank compression typically relies on singular value decomposition (SVD)~\citep{SVD_algorithm}. Given a weight matrix $\mathbf{W}\in\mathbb{R}^{n\times m}$, SVD yields the rank-$r$ approximation
\(
\mathbf{W}\approx \mathbf{U}\mathbf{S}\mathbf{V}^\top \;=\; \sum_{i=1}^{r}\sigma_i\,\mathbf{u}_i\mathbf{v}_i^\top,
\)
where $\{\sigma_i\}_{i=1}^r$ are nonnegative singular values and $\{\mathbf{u}_i\}_{i=1}^r\subset\mathbb{R}^n$, $\{\mathbf{v}_i\}_{i=1}^r\subset\mathbb{R}^m$ are orthonormal singular vectors. This representation views $\mathbf{W}$ as a weighted sum of rank-one orthonormal bases $\{\mathbf{u}_i\mathbf{v}_i^\top\}_{i=1}^r$, with weights given by $\sigma_i$. Recent evidence suggests that many such bases encode task-specific semantics (e.g., mathematics, coding, geography) that may be valuable for some downstream applications but unnecessary for others~\citep{li2025basel}. Motivated by this observation, Basel~\citep{li2025basel}, the state-of-the-art low-rank decomposition method for model compression, compresses pretrained models by identifying and removing bases that are irrelevant to the target application. Concretely, Basel augments each SVD factorization with a small set of auxiliary factors:
\begin{equation}
\label{eqn:basel}
\widetilde{\mathbf{W}} \;=\; \sum_{i=1}^{r}\sigma_i\,\mathbf{u}_i\mathbf{v}_i^\top \;+\; \sum_{j=1}^{\tilde r}\tilde{\mathbf{u}}_j\tilde{\mathbf{v}}_j^\top,
\end{equation}
where the rank-one bases $\{\mathbf{u}_i\mathbf{v}_i^\top\}_{i=1}^r$ are fixed from the pretrained SVD, and only their coefficients $\{\sigma_i\}$ are re-learned on target data. The second term introduces $\tilde r$ additional learnable vectors $\{(\tilde{\mathbf{u}}_j,\tilde{\mathbf{v}}_j)\}_{j=1}^{\tilde r}$ to capture domain-specific features absent from pretraining. After optimizing these learnable parameters on the target application, many coefficients $\sigma_i$ shrink toward zero, and Basel prunes the corresponding bases by setting $\sigma_i=0$. This raises a central challenge: deciding which bases are essential for task performance and which can be safely discarded. We refer to this challenge as the \emph{basis selection problem}.

\subsection{The Basis Selection Problem}
\label{subsec:basis-selection}

Basel addresses the basis selection problem using a simple magnitude criterion: after adaptation, it ranks bases by their re-learned singular values and prunes those with small $\sigma_i$, implicitly assuming that a smaller $\sigma_i$ means the corresponding basis \(\mathbf{u}_i\mathbf{v}_i^\top\) is irrelevant to the target application, whereas a larger $\sigma_i$ indicates an essential basis. While intuitive and empirically efficient, this magnitude-based heuristic is not necessarily aligned with the performance of the target task. In particular, treating $\sigma_i$ as a proxy for task relevance overlooks the local geometry of the loss landscape: a basis with a small coefficient can still lie in a direction to which the loss is highly sensitive, so removing it may incur a disproportionate degradation. Moreover, the heuristic ignores curvature (higher-order) information, limiting its ability to reliably predict the impact of pruning.

In this paper, we propose \underline{B}asis \underline{S}election with \underline{I}mportance (BSI), a principled low-rank compression framework that links basis selection directly to the loss landscape. Instead of relying solely on re-learned singular values, BSI evaluates each basis by estimating the expected increase in loss incurred when that basis is removed. Our approach is grounded in a second-order Taylor expansion of the loss $\ell$ with respect to the singular values $\{\sigma_i\}$. This yields a formal importance score for dropping basis $i$:
\(
I_i \triangleq -\sigma_i\,\mathbb{E}\!\left[\frac{\partial \ell}{\partial \sigma_i}\right] \;+\; \frac{1}{2}\sigma_i^2\,\mathbb{E}\!\left[\frac{\partial^2 \ell}{\partial \sigma_i^2}\right].
\)
Unlike magnitude-based heuristics, this criterion explicitly incorporates both first-order sensitivity (gradient) and second-order curvature (Hessian) of the loss, thereby quantifying how pruning a basis affects task performance.  By prioritizing the retention of bases according to their influence on the loss rather than their spectral magnitude, BSI enables more effective compression of large-scale models while preserving performance (Section~\ref{sec:algorithm_design}).


A key technical challenge in realizing this principle for LLMs is to estimate, efficiently and reliably, the second-order term in $I_i$ at scale. Directly computing Hessian information is prohibitive for the re-parameterized model: if the model has \(n\) active singular-value parameters, then the Hessian \(H_{\sigma} = \nabla_{\sigma}^{2}\ell\), whose entries are \(\partial^{2}\ell / (\partial \sigma_i \partial \sigma_j)\), is an \(n \times n\) matrix, requiring \(O(n^2)\) storage and making explicit formation or diagonal extraction infeasible.
To address this, we adapt the Hutchinson estimator~\citep{hutchinson1989stochastic}, originally developed for trace estimation of large implicit matrices, to estimate the Hessian diagonal using randomized symmetric parameter perturbations (Section~\ref{sec:BSI_derivation}).


We provide a comprehensive theoretical analysis of the proposed algorithm. (1) We derive an upper bound on the loss increase induced by basis pruning and quantify how Hessian-diagonal estimation error propagates into this bound (Section~\ref{sec:loss_bound}). (2) We characterize the variance of our Hessian-diagonal estimator and relate it to the Hessian’s spectral structure (Section~\ref{sec:variance_convergence}). (3) We establish sample-complexity guarantees: the number of stochastic probes required to bound the Hessian-diagonal estimation error within $\epsilon$ with probability at least $1-\delta$, explicitly in terms of the Hessian spectrum (Section~\ref{sec:prob_bound}). (4) We analyze the choice of perturbation intensity for accurate Hessian-diagonal estimation (Appendix~\ref{appendix:perturbation-intensity}).

Our contributions are summarized as follows:
\begin{itemize}
    \item We propose \emph{Basis Selection with Importance} (BSI), a principled low-rank decomposition framework for LLM compression. BSI links basis selection to the loss landscape via a derivative-based importance metric that incorporates both gradient and Hessian information, aiming to minimize the expected loss increase incurred by dropping bases.

    \item We provide a rigorous theoretical analysis of BSI, including (i) bounds on the loss impact of basis pruning and its dependence on Hessian-diagonal estimation error, (ii) variance and high-probability guarantees for our Hessian-diagonal estimator, and (iii) an analysis of the perturbation intensity used in the estimator.

    \item We demonstrate the practical effectiveness of BSI through extensive experiments on multiple mathematical reasoning tasks. BSI consistently outperforms state-of-the-art low-rank decomposition baselines, with particularly strong gains under deep compression.
\end{itemize}

\subsection{Related Work}

Model compression addresses the deployment bottlenecks of LLMs including memory, inference latency, and computational cost. To compress over-parameterized networks efficiently, researchers have developed several distinct paradigms, primarily categorized into quantization, pruning, and low-rank decomposition.

Quantization compresses a model by representing weights or activations with fewer bits (e.g., INT8, INT4) rather than FP16/FP32. This reduces memory and can accelerate inference on hardware optimized for arithmetics. A foundational work in this area is LLM.int8()~\citep{dettmers2022llmint8}, which separates the Transformer's feed-forward and attention weight matrices into outlier and non-outlier dimensions based on activation magnitude. Outlier dimensions are computed in higher precision, while the remaining weights are quantized to 8-bit and multiplied using vector-wise scaling derived from row- and column-wise absolute maxima. Similarly, GPTQ~\citep{frantar2023gptq} performs block-wise quantization of weight matrices using a second-order approximation of the loss to minimize quantization error.


Pruning reduces model size by identifying and removing redundant parameters from a neural network. Early work by \citet{han2015deep} introduced iterative magnitude pruning, which learns the importance of individual connections and removes those considered unimportant. More recently, \citet{wanda} proposed Wanda, which estimates weight importance using both weight magnitude and the norm of the corresponding input activation. To improve inference efficiency, \citet{flap} introduced FLAP, a structured pruning method based on activation fluctuation. FLAP measures how removing individual weight columns changes the output feature map, then prunes components that cause only small fluctuations, helping preserve model performance.


Low-rank compression approximates a large weight matrix as the product of lower-dimensional matrices, thereby reducing the number of parameters as well as memory and computational costs. A standard approach in this direction is SVD~\citep{SVD_algorithm, svd_noach}. To improve SVD, \citet{hsu2022fwswd} propose a weighted low-rank factorization (FWSVD) that incorporates gradient information via Fisher-style weighting to preserve task performance. Recent work by \citet{li2025basel} interprets singular vectors as semantic bases that encode domain-specific information, and performs basis selection to preserve task performance using the magnitude of the singular values as an importance score.

Despite their empirical success, existing low-rank decomposition methods either minimize reconstruction error or adopt heuristic basis selection rules, objectives that can be poorly aligned with the quantities that ultimately matter—downstream loss and accuracy. We investigate this discrepancy through rigorous theoretical analysis, and address it by introducing a principled, loss-grounded importance measure to select bases during compression.

\section{Preliminaries}
\paragraph{Notation. \label{notation}}
To avoid ambiguity, we distinguish the two diagonal operators used throughout the paper.
The operator $\mathrm{Diag}\!:\mathbb{R}^n \to \mathbb{R}^{n\times n}$ maps a vector
$\mathbf{a}\in\mathbb{R}^n$ to the diagonal matrix whose $i$-th diagonal entry equals $a_i$
(with all off-diagonal entries equal to zero). Conversely,
$\mathrm{diag}\!:\mathbb{R}^{n\times n}\to\mathbb{R}^n$ extracts the diagonal of a matrix, i.e.,
\(
\mathrm{diag}(\mathbf{A}) \triangleq [A_{11}, A_{22}, \ldots, A_{nn}]^\top,
\)
where $\mathbf{A}\in\mathbb{R}^{n\times n}$ and $A_{ii}$ denotes the $(i,i)$-th entry of $\mathbf{A}$.
We use this convention consistently throughout the paper.

\subsection{Hutchinson Trace and Diagonal Estimators}

 \begin{definition}[Rademacher Vectors]
\label{def:rademacher}
A random vector $\mathbf{z} = [z_1, z_2, \dots, z_n]^\top$ is a Rademacher vector if its entries $z_i$ are independent and identically distributed (i.i.d) random variables taking values in $\{-1,+1\}$, with probability
\(
\Pr(z_i = \pm 1) = \tfrac{1}{2}.
\)

\end{definition}

Let $\mathbf{z} \in \{-1,+1\}^n$  be a Rademacher random vector. Hutchinson's trace estimator $ T^{(s)}$ for a matrix $\mathbf{A} \in \mathbb{R}^{n \times n}$ is defined as:
\(
T^{(s)}(\mathbf{A}) = \frac{1}{s}\sum_{k=1}^{s} (\mathbf{z}^{(k)})^\top \mathbf{A}\,\mathbf{z}^{(k)} .
\)
It is well-established that when \(\mathbf{A}\) is symmetric, $\mathbb{E}[T^{(s)}(\mathbf{A})]=\mathrm{tr}(\mathbf{A})$ and
\(
\mathrm{Var}[T^{(s)}(\mathbf{A})]=\frac{2}{s}\,\|\bar{\mathbf{A}}\|_F^2,
\)
where $\bar{\mathbf{A}}=\mathbf{A}-\mathrm{Diag}(\mathrm{diag}(\mathbf{A}))$ denotes the matrix obtained from $\mathbf{A}$ by setting its diagonal entries to zero.

The Hutchinson diagonal estimator can be understood as a localized extension of the trace estimator. While the trace estimator $T^{(s)}(\mathbf{A})$ for a matrix $\mathbf{A}$ collapses the quadratic form $\mathbf{z}^\top \mathbf{A}\mathbf{z}$ into a single scalar, the diagonal estimator preserves the entry-wise structure of the operator.

Following the formulation of \citep{bekas2007estimator}, the diagonal of a matrix $\mathbf{A} \in \mathbb{R}^{n \times n}$ can be estimated by considering the Hadamard product of a probe vector $\mathbf{z}$ and its transformation $\mathbf{Az}$. Formally, given a sequence of $s$ number of i.i.d.~Rademacher probe vectors $\{\mathbf{z}^{(k)}\}_{k=1}^s$, Hutchinson's diagonal estimator ${\mathrm{D}}^{(s)}$ is:
\begin{equation}
    \mathrm{D}^{(s)}(\mathbf{A})
 = \frac{1}{s} \sum_{k=1}^s \mathbf{z}^{(k)} \odot (\mathbf{Az}^{(k)}),
\end{equation}
where $\odot$ denotes the element-wise (Hadamard) product.



\subsection{Auxiliary Definitions}

\begin{definition}[Riemann Zeta Function]
\label{def:zeta}
The Riemann zeta function is defined for real $s > 1$ as
\[
\zeta(s) \coloneqq \sum_{k=1}^{\infty} \frac{1}{k^{s}}.
\]
\end{definition}
It is classical that the series converges for all $s>1$.

\begin{definition}[Generalized harmonic number / partial zeta sum]
\label{def:harmonic}
For $n\in\mathbb{N}$ and $s\in\mathbb{R}$, the \emph{generalized harmonic number}
is defined by
\[
H_{n,s}
\;:=\;
\sum_{k=1}^{n} \frac{1}{k^{s}}.
\]
\end{definition}
$H_{n,s}$ increases monotonically in $n$. $H_{n,s}$ converges to $\zeta(s)$ as $n\to\infty$ for every $s>1$.



\begin{lemma}[Entrywise Spectral Decomposition]
\label{lem:entrywise-spectral}
Let $\mathbf{H} \in \mathbb{R}^{n \times n}$ be a symmetric matrix. Then there exists a set of
orthonormal eigenvectors $\{\mathbf{u}_k\}_{k=1}^n$ with corresponding eigenvalues
$\{\lambda_k\}_{k=1}^n$ such that
\(
\mathbf{H} = \sum_{k=1}^n \lambda_k\mathbf{u}_k \mathbf{u}_k^\top .
\)
In particular, the entries of $\mathbf{H}$ satisfy
\(
H_{ij} = \sum_{k=1}^n \lambda_k u_{ki} u_{kj} 
\) where $u_{ki}$ is the $i$-th element of the $k$-th eigenvector.
\end{lemma}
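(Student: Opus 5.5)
This is the real spectral theorem followed by reading off matrix entries, so the plan has two parts: first get an orthonormal eigenbasis, then expand $\mathbf{H}$ in it and look at the $(i,j)$ entry.

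\textbf{Step 1: existence of the eigenbasis.} I would simply invoke the spectral theorem for real symmetric matrices. If a self-contained argument is wanted, I would use induction on $n$:
\begin{itemize}
\item By the fundamental theorem of algebra, the characteristic polynomial of $\mathbf{H}$ has a root $\lambda\in\mathbb{C}$ with a corresponding eigenvector $\mathbf{x}\in\mathbb{C}^n$.
\item Symmetry gives $\bar{\mathbf{x}}^\top\mathbf{H}\mathbf{x}=\lambda\|\mathbf{x}\|^2$, and this quantity is real. Hence $\lambda\in\mathbb{R}$, and taking real or imaginary parts gives a real unit eigenvector $\mathbf{u}_1$.
\item The complement $\mathbf{u}_1^\perp$ is $\mathbf{H}$-invariant: if $\mathbf{w}\perp\mathbf{u}_1$, then $\mathbf{u}_1^\top\mathbf{H}\mathbf{w}=(\mathbf{H}\mathbf{u}_1)^\top\mathbf{w}=\lambda\,\mathbf{u}_1^\top\mathbf{w}=0$.
\item The restriction of $\mathbf{H}$ to $\mathbf{u}_1^\perp$ is symmetric with respect to an orthonormal basis of that subspace, so the induction hypothesis supplies the remaining $n-1$ orthonormal eigenvectors.
\end{itemize}

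\textbf{Step 2: the expansion $\mathbf{H}=\sum_k\lambda_k\mathbf{u}_k\mathbf{u}_k^\top$.} Let $\mathbf{U}=[\mathbf{u}_1,\dots,\mathbf{u}_n]$, which is orthogonal. Then $\mathbf{H}\mathbf{U}=\mathbf{U}\,\mathrm{Diag}(\boldsymbol{\lambda})$, so $\mathbf{H}=\mathbf{U}\,\mathrm{Diag}(\boldsymbol{\lambda})\,\mathbf{U}^\top$. Expanding this product column by column gives $\sum_k\lambda_k\mathbf{u}_k\mathbf{u}_k^\top$. An equivalent check is that both sides agree on the basis $\{\mathbf{u}_j\}$, since $\sum_k\lambda_k\mathbf{u}_k(\mathbf{u}_k^\top\mathbf{u}_j)=\lambda_j\mathbf{u}_j=\mathbf{H}\mathbf{u}_j$.

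\textbf{Step 3: the entrywise formula.} Take the $(i,j)$ entry of each rank-one term: $(\mathbf{u}_k\mathbf{u}_k^\top)_{ij}=u_{ki}u_{kj}$. Summing over $k$ with weights $\lambda_k$ gives $H_{ij}=\sum_k\lambda_k u_{ki}u_{kj}$.

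The only substantive step is Step 1, specifically showing the eigenvalues are real and carrying out the invariant-subspace induction. Since this is classical, I would cite it rather than reprove it.
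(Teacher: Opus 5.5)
Your proposal is correct and takes the same approach as the paper. The paper gives no separate proof: it treats the lemma as the classical spectral theorem for real symmetric matrices. Your Steps 2--3, writing $\mathbf{H}=\mathbf{U}\,\mathrm{Diag}(\boldsymbol{\lambda})\,\mathbf{U}^\top$ and reading off $(\mathbf{u}_k\mathbf{u}_k^\top)_{ij}=u_{ki}u_{kj}$, are exactly how it is used later, and your optional inductive sketch in Step 1 is sound.
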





\section{Basis Selection with Importance (BSI) \label{sec:method}}

Our BSI framework establishes a principled selection rule that minimizes the expected loss increase incurred by dropping bases. It links the \emph{basis selection} problem to the local geometry of the loss landscape by deriving an importance metric from the first- and second-order derivatives of the loss $\mathcal{\ell}$ with respect to the singular values. We first describe the framework design in Section~\ref{sec:algorithm_design}, and then provide a detailed theoretical analysis in Section~\ref{sec:theoretical_analysis}. Throughout, we adopt the denominator-layout convention for matrix calculus.


\subsection{Design of the BSI Algorithm \label{sec:algorithm_design}}

\subsubsection{Importance Metric for Selecting Bases \label{sec:BSI_derivation}}
Let $\mathbf{W}$ denote the weight matrix of a linear layer. Using the singular value decomposition, we write
\(
\mathbf{W} =  \sum_{i} \sigma_i \mathbf{u}_i \mathbf{v}_i^\top.
\)
Given an input $\mathbf{x}$, the layer output is
\(
\mathbf{y}=\mathbf{W}\mathbf{x}=\sum_{i}\sigma_i\,\mathbf{u}_i \mathbf{v}_i^\top \mathbf{x}.
\)
Dropping the $i$-th basis corresponds to removing its contribution, i.e., setting $\sigma_i=0$. Therefore, the induced change in the singular value is
\(
\Delta\sigma_i = 0-\sigma_i = -\sigma_i,
\)
which in turn yields the following expected change in the loss:
\begin{align*}
\mathbb{E}\!\left[\Delta \ell\right] & \equiv \mathbb{E}\!\left[\ell(\sigma_i + \Delta \sigma_i) - \ell(\sigma_i)\right] \\
&\approx
\mathbb{E}\!\left[\left(
\ell(\sigma_i)
+
\frac{\partial \ell}{\partial \sigma_i}\Delta \sigma_i
+
\frac{1}{2}\frac{\partial^2 \ell}{\partial \sigma_i^2}(\Delta \sigma_i)^2
\right)
-
\ell(\sigma_i)\right] \\
&=
\mathbb{E}\!\left[\frac{\partial \ell}{\partial \sigma_i}\Delta \sigma_i
+ \frac{1}{2}\frac{\partial^2 \ell}{\partial \sigma_i^2}(\Delta \sigma_i)^2\right] \\
&= -\sigma_i\, \mathbb{E}\!\left[\frac{\partial \ell}{\partial \sigma_i}\right]
\;+\;
\frac{1}{2}\sigma_i^2\, \mathbb{E}\!\left[\frac{\partial^2 \ell}{\partial \sigma_i^2}\right].
\end{align*}
Considering this, we define importance metric for basis $i$ as
\[
I_i \;\triangleq\;
-\sigma_i\, \mathbb{E}\!\left[\frac{\partial \ell}{\partial \sigma_i}\right]
\;+\;
\frac{1}{2}\sigma_i^2\, \mathbb{E}\!\left[\frac{\partial^2 \ell}{\partial \sigma_i^2}\right],
\]
where $\mathbb{E}\!\left[\frac{\partial \ell}{\partial \sigma_i}\right]$ denotes the gradient with respect to $\sigma_i$, and
$\mathbb{E}\!\left[\frac{\partial^2 \ell}{\partial \sigma_i^2}\right]$ denotes the second derivative with respect to $\sigma_i$. Pruning bases with a larger importance metric is expected to induce a larger increase in the loss; therefore, we should avoid pruning such bases whenever possible. Computing this importance metric requires estimating the second derivative with respect to each singular value, which can be obtained via Theorem~\ref{thm:hessian_diag_est}.

\begin{restatable}[Hessian Diagonal Estimator]{theorem}{HessDiagEstimate}
\label{thm:hessian_diag_est}
For any parameter $\sigma_i$ of the model,
\[
\mathbb{E}_{d\sim \mathcal{D}}\!\left[\left.\frac{\partial^2 \ell}{\partial \sigma_i^2}\right|_{\boldsymbol{\sigma} = \boldsymbol{\sigma}^\ast}\right] +\; O(\epsilon^2)
\;=\;
\mathbb{E}_{\substack{\mathbf{z} \sim \mathcal{R}(\mathcal{Z}) \\ d \sim \mathcal{D}}}
\left[
\left(
\left.\frac{\partial \ell(d)}{\partial \boldsymbol{\sigma}}\right|_{\boldsymbol{\sigma} = \boldsymbol{\sigma}^\ast + \frac{\epsilon}{2} \mathbf{z}}
-
\left.\frac{\partial \ell(d)}{\partial \boldsymbol{\sigma}}\right|_{\boldsymbol{\sigma} = \boldsymbol{\sigma}^\ast - \frac{\epsilon}{2} \mathbf{z}}
\right)_i
\cdot
\frac{z_i}{\epsilon}
\right],
\]
where $0 < \epsilon < 1$ denotes the perturbation intensity, and $\mathbf{z}=[z_1,z_2,\dots,z_n]^\top$ is a Rademacher random vector defined in Definition~\ref{def:rademacher}.
\end{restatable}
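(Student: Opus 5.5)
The plan is to Taylor-expand the gradient around $\boldsymbol{\sigma}^\ast$ for a fixed data point $d$, use the Rademacher moments to isolate the diagonal entry, and then average over $d$.

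\textbf{Step 1: expand the gradient.} Fix $d$ and write $\mathbf{g}(\boldsymbol{\sigma}) = \partial \ell(d)/\partial\boldsymbol{\sigma}$ and $\mathbf{H}=\nabla^2_{\boldsymbol{\sigma}}\ell(d)$ at $\boldsymbol{\sigma}^\ast$. I assume $\ell(d)$ is three times continuously differentiable in a neighborhood of $\boldsymbol{\sigma}^\ast$ of radius at least $\sqrt{n}/2$, since $\|\tfrac{\epsilon}{2}\mathbf{z}\|=\tfrac{\epsilon}{2}\sqrt n$. For $\mathbf{h}=\tfrac{\epsilon}{2}\mathbf{z}$, I expand $\mathbf{g}(\boldsymbol{\sigma}^\ast\pm\mathbf{h})$ to second order:
\[
\mathbf{g}(\boldsymbol{\sigma}^\ast\pm\mathbf{h}) = \mathbf{g} \pm \mathbf{H}\mathbf{h} + \tfrac12 \mathbf{T}[\mathbf{h},\mathbf{h}] \pm \mathbf{R}_\pm ,
\]
where $\mathbf{T}$ is the third-derivative tensor and $\mathbf{R}_\pm=O(\|\mathbf{h}\|^3)$.

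\textbf{Step 2: take the symmetric difference.} In the difference, the zeroth- and second-order terms cancel exactly. This cancellation is why the perturbations are taken symmetric. What remains is
\[
\mathbf{g}(\boldsymbol{\sigma}^\ast+\mathbf{h})-\mathbf{g}(\boldsymbol{\sigma}^\ast-\mathbf{h}) = \epsilon\,\mathbf{H}\mathbf{z} + O(\epsilon^3).
\]
I will use the integral form of the remainder, or a bound $M$ on the fourth derivatives, so that the constant is uniform in $\mathbf{z}$, which ranges over a finite set.

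\textbf{Step 3: extract the diagonal.} Multiplying the $i$-th component by $z_i/\epsilon$ gives
\[
(\mathbf{H}\mathbf{z})_i z_i + O(\epsilon^2) = H_{ii} z_i^2 + \sum_{j\neq i} H_{ij} z_j z_i + O(\epsilon^2).
\]
Taking the expectation over the Rademacher $\mathbf{z}$ (Definition~\ref{def:rademacher}), I use $z_i^2=1$ and $\mathbb{E}[z_iz_j]=0$ for $j\neq i$. This yields $H_{ii}(d)+O(\epsilon^2)$, which is the same mechanism as in the Hutchinson diagonal estimator $\mathrm{D}^{(s)}$.

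\textbf{Step 4: average over data.} Finally I take the expectation over $d\sim\mathcal{D}$ and exchange expectations by Fubini. This is justified because the $\mathbf{z}$-expectation is a finite sum. Provided the $O(\epsilon^2)$ constant is integrable in $d$, for instance under a uniform bound on the third and fourth derivatives, the result is the claimed identity. The condition $0<\epsilon<1$ only ensures that the $O(\epsilon^2)$ term indeed controls the remainder.

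\textbf{Main obstacle.} The main difficulty is controlling the remainder rigorously. The remainder must be $O(\epsilon^2)$ uniformly over $\mathbf{z}$ and $d$, and its hidden constant scales with $n^{3/2}$ through $\|\mathbf{z}\|^3$. I would state an explicit smoothness assumption, such as bounded third derivatives along segments, to make the big-O meaningful.
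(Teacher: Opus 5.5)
Your proposal is correct and follows essentially the same route as the paper's proof. The symmetric difference of gradients equals $\epsilon\,\mathbf{H}\mathbf{z}+O(\epsilon^3)$; multiplying the $i$-th entry by $z_i/\epsilon$ and using $z_i^2=1$ and $\mathbb{E}[z_iz_j]=0$ for $j\neq i$ isolates $H_{ii}(d)$; averaging over $d$ gives the claim. Your handling of the remainder's uniformity in $\mathbf{z}$ and $d$, including its $n^{3/2}$ dependence, makes explicit what the paper leaves implicit.

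One small correction: the $O(\epsilon^2)$ rate needs the fourth-derivative bound you invoke in Steps 2 and 4. The weaker assumptions you state elsewhere fall short: $C^3$ regularity (Step 1) gives only an $o(\epsilon)$ error after dividing by $\epsilon$, and merely bounded third derivatives (your closing paragraph) gives only $O(\epsilon)$.
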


\begin{proof}
    The proof can be found in Appendix~\ref{apdx:hessian_diagona_estimate_proof}
\end{proof}

Theorem~\ref{thm:hessian_diag_est} shows that the Hessian diagonal estimate can be obtained using only gradients, by applying symmetric perturbations along randomly sampled Rademacher vectors.

\subsubsection{Algorithmic Design \label{subsection:BSI_alg_formulation}}

The BSI procedure is shown in Algorithm \ref{alg:bsi}. The algorithm compresses a pretrained model $\mathcal{M}$ to a target size defined by the \texttt{KeepRatio} through iterative pruning. The process begins by converting linear layer weight matrices, excluding the embedding layer, into the basis-form defined in Eq. \eqref{eqn:basel}. Following \citet{li2025basel}, the model is fine-tuned both before and after the pruning stage to improve adaptation to the target application.

Prior to pruning, the first- and second-order components of the importance metrics $I_i$ are computed. The gradient (the first-order term) is obtained via back-propagation pass. Theorem~\ref{thm:hessian_diag_est} provides a principled way to estimate the second-order term (Hessian) for large models. To achieve this efficiently, a candidate pool $\mathcal{C}$ is first identified (see Appendix~\ref{sec:candidate_pool}) to reduce computational cost.  Bases in this pool are considered for the second-order computation. The model is perturbed along both positive and negative directions using $\mathbf{z}$ 
and $\epsilon$, where $\mathbf{z}$ is a Rademacher random vector and $\epsilon \in \mathbb{R}$ denotes the perturbation intensity.
The gradients $\mathbf{g}^{+}$ and $\mathbf{g}^{-}$ are computed at these perturbed states 
 yielding the Hessian diagonal estimate
\(
\mathbb{E}\!\left[\frac{\partial^2 \ell}{\partial \sigma_i^2}\right] = \big(\mathbf{g}^{+} - \mathbf{g}^{-}\big)_i \cdot \frac{z_i}{\epsilon}.
\)
The final importance score for dropping the $i$-th basis combines the gradient-based first-order term with the Hessian-diagonal-based second-order term. Both quantities are averaged over multiple mini-batches for stability. At each pruning step, we remove bases with the smallest $I_i$ until the remaining set satisfies the \texttt{KeepRatioPerPruning} threshold.

\begin{algorithm}[ht]
\small
\caption{BSI Algorithm}
\label{alg:bsi}
\begin{algorithmic}[1]
\STATE \textbf{Input:} Pretrained or fine-tuned model $\mathcal{M}$
\STATE \textbf{Output:} Compressed model $\widetilde{\mathcal{M}}$
\STATE \textbf{Data:} \texttt{PruningEpochs}; \texttt{PruningRounds}; \texttt{NumIterPerEpoch}; \texttt{SamplingIterRatio}; \texttt{KeepRatio}; Perturbation Intensity $\epsilon$; Dataset $ \mathcal{D}$

\STATE $\texttt{IterPerPruning} \gets 
\mathrm{round}\!\left(
\text{ \texttt{NumIterPerEpoch} $\times $ \texttt{PruningEpochs} }\text{ / }{\texttt{PruningRounds}} 
\right)$
\STATE $\texttt{KeepRatioPerPruning} \gets (\text{\texttt{KeepRatio}})^{1/\texttt{PruningRounds}}$
\STATE $\texttt{NumProfilingIter} \gets \mathrm{round}\!\left(\texttt{SamplingIterRatio} \times \texttt{IterPerPruning}\right)$

\FORALL{$\text{linear layer }\ell \in \mathcal{M}$ --- excluding the embedding layer}
    \STATE Convert weight matrix $\mathbf{W}^{(\ell)}$ into the form of Eq. \eqref{eqn:basel}, yielding active bases $\mathcal{A}^{(\ell)}$
\ENDFOR

  \STATE $\mathbb{E}\!\left[\frac{\partial \ell(d)}{\partial \sigma_i}\right] \gets 0,\quad
       \mathbb{E}\!\left[\frac{\partial^2 \ell(d)}{\partial \sigma_i^2}\right] \gets 0$
\FOR{$\texttt{iter}=1$ \TO 
$\text{\texttt{NumIterPerEpoch} $\times $ \texttt{PruningEpochs}}$}
    \STATE Sample $d \sim \mathcal{D}$
    \STATE $\texttt{IterInBlock} \gets ((\texttt{iter}-1) \bmod \texttt{IterPerPruning}) + 1$
    
    \IF{$\texttt{IterInBlock} \le \texttt{IterPerPruning} - \texttt{NumProfilingIter}$}
        \STATE Tune learnable parameters in Eq. \eqref{eqn:basel} for all layers using $d$
    \ELSE
        \STATE Compute gradients $\frac{\partial \ell(d)}{\partial { \boldsymbol{\sigma}}}$ for all layers in the model
        \STATE $\mathbb{E}\!\left[\frac{\partial \ell(d)}{\partial \boldsymbol{\sigma}}\right]
        \gets
        \mathbb{E}\!\left[\frac{\partial \ell(d)}{\partial \boldsymbol{\sigma}}\right]
        +
        \frac{1}{\texttt{NumProfilingIter}}
        \frac{\partial \ell(d)}{\partial \boldsymbol{\sigma}}$

        \FORALL{$\text{linear layer }\ell \in \mathcal{M} $ --- excluding the embedding layer}
            \STATE Let $\mathcal{A}\gets \mathcal{A}^{(\ell)}$ with singular values $\{\sigma_i\}_{i\in\mathcal{A}}$
            \STATE Define candidate pool $\mathcal{C}\subseteq\mathcal{A}$ based on Eq. \eqref{eqn:keeping_set}
            \STATE Sample $\mathbf{z} \sim \mathcal{R}(\mathcal{Z})$, where $z_i = 0$ for all $i \notin \mathcal{C}$
            
            \STATE Compute $\mathbf{g}^{+} \gets \left.\frac{\partial \ell(d)}{\partial {\boldsymbol{\sigma}}}\right|_{{\boldsymbol{\sigma}} = {\boldsymbol{\sigma}}^\ast + \dfrac{\epsilon}{2} \mathbf{z}}$
            \STATE Compute $\mathbf{g}^{-} \gets \left.\frac{\partial \ell(d)}{\partial {\boldsymbol{\sigma}}}\right|_{{\boldsymbol{\sigma}} = {\boldsymbol{\sigma}}^\ast - \dfrac{\epsilon}{2} \mathbf{z}}$
            
            \FORALL{$i \in \mathcal{C}$}
                \STATE $\mathbb{E}\!\left[\frac{\partial^2 \ell(d)}{\partial \sigma_i^2}\right] \gets 
                \mathbb{E}\!\left[\frac{\partial^2 \ell(d)}{\partial \sigma_i^2}\right]
                +
                \frac{1}{\texttt{NumProfilingIter}}
                \big(\mathbf{g}^{+}-\mathbf{g}^{-}\big)_i \cdot \frac{z_i}{\epsilon}$
            \ENDFOR
        \ENDFOR
    \ENDIF

    \IF{$\texttt{IterInBlock} = \texttt{IterPerPruning}$}
        \FORALL{$\text{linear layer }\ell \in \mathcal{M}$ --- excluding the embedding layer}
            \STATE $I_i \gets -\sigma_i \,\mathbb{E}\!\left[\frac{\partial \ell(d)}{\partial \sigma_i}\right]
            + \frac{1}{2}\sigma_i^2\,\mathbb{E}\!\left[\frac{\partial^2 \ell(d)}{\partial \sigma_i^2}\right]$
          \STATE $\mathbb{E}\!\left[\frac{\partial \ell(d)}{\partial \sigma_i}\right] \gets 0,\quad
       \mathbb{E}\!\left[\frac{\partial^2 \ell(d)}{\partial \sigma_i^2}\right] \gets 0$

            \STATE \parbox[t]{0.9\linewidth}{
            Prune bases with the smallest importance scores $I_i$ from the candidate pool $C$,
subject to the constraint that the total importance score of the remaining bases in $C$
is equal to $\texttt{KeepRatioPerPruning}$ times the total positive importance score in $C$
before pruning (if all bases in the candidate pool have negative importance scores, we prune the entire pool).
}
        \ENDFOR
       
    \ENDIF

\ENDFOR

\STATE \textbf{return} $\widetilde{\mathcal{M}}$
\end{algorithmic}
\end{algorithm}

\subsection{Theoretical Analysis of BSI \label{sec:theoretical_analysis}}

We first bound the loss change induced by dropping a basis, considering both the exact Hessian setting and the case with estimator error (Section~\ref{sec:loss_bound}). We then analyze the variance of the Hutchinson diagonal estimator and derive a probability bound for the estimator error leveraging the power-law decay of neural network Hessian spectrum (Sections~\ref{sec:variance_convergence} and~\ref{sec:prob_bound}). In addition,   we analyze the perturbation intensity of the Hessian estimate (Appendix~\ref{appendix:perturbation-intensity}). Finally, we prove that the Hessian diagonal estimate for the output layer is non-negative (Appendix~\ref{appendix:proof-prop-lm-head}).

\subsubsection{Loss Change Under Basis Pruning 
\label{sec:loss_bound}}
In Theorem~\ref{thm:basis_pruning_bound_on_loss}, we give a bound on the loss induced by pruning a basis in the idealized setting where the Hessian diagonal is exact. 
Theorem~\ref{thm:basis_pruning_relative_hessian_error} extends the bound to the practical setting where the Hessian diagonal are estimated with error.

\begin{restatable}[Bound on Loss Change from Pruning Bases]{theorem}{LossBoundGeneral}
\label{thm:basis_pruning_bound_on_loss}
Let $ \ell:\mathbb{R}^r\to\mathbb{R}$ be the loss as a function of the basis singular values $\mathbf{s}$, and assume the Hessian $\nabla^2  \ell$ is $\rho$-Lipschitz:
\[
\|\nabla^2 \ell(\mathbf{x}) - \nabla^2  \ell(\mathbf{y})\|_2
\le \rho \|\mathbf{x} - \mathbf{y}\|_2
\quad \text{for all } \mathbf{x},\mathbf{y}.
\]
Pruning basis $i$ leads to a change in loss following the bound
\[
|\Delta  \ell|
=
\big| \ell(\mathbf{s}-s_i\mathbf{e}_i)- \ell(\mathbf{s})\big|
\le
\left|-g_i s_i + \frac{1}{2}H_{ii}s_i^2\right|
+
\frac{\rho}{6}|s_i|^3,
\]
where $g_i := \nabla  \ell(\mathbf{s})^\top \mathbf{e}_i$, $H_{ii}:=\mathbf{e}_i^\top \nabla^2  \ell(\mathbf{s})\,\mathbf{e}_i$,
and $\mathbf{e}_i=[0,\ldots,0,1,0,\ldots,0]^\top$ with a $1$ on its $i$-th dimension.
\end{restatable}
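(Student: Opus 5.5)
The approach is a second-order Taylor expansion of $\ell$ along the segment joining $\mathbf{s}$ to $\mathbf{s}-s_i\mathbf{e}_i$, with the third-order remainder controlled by the $\rho$-Lipschitz assumption on $\nabla^2\ell$. Define the one-dimensional restriction $\phi(t) := \ell(\mathbf{s} - t s_i \mathbf{e}_i)$ for $t\in[0,1]$. Then $\Delta\ell = \phi(1)-\phi(0)$. The chain rule gives
\[
\phi'(t) = -s_i\,\nabla\ell(\mathbf{s}-ts_i\mathbf{e}_i)^\top\mathbf{e}_i,
\qquad
\phi''(t) = s_i^2\,\mathbf{e}_i^\top\nabla^2\ell(\mathbf{s}-ts_i\mathbf{e}_i)\,\mathbf{e}_i .
\]
In particular $\phi'(0) = -g_i s_i$ and $\phi''(0) = H_{ii}s_i^2$. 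The Hessian is Lipschitz, hence continuous, so $\phi$ is $C^2$ and the integral form of Taylor's theorem applies.

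The key steps, in order, are as follows.

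First, write
\[
\phi(1)-\phi(0) = \phi'(0) + \int_0^1 (1-t)\,\phi''(t)\,dt .
\]

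Second, split $\phi''(t) = \phi''(0) + \big(\phi''(t)-\phi''(0)\big)$. Since $\int_0^1(1-t)\,dt = \tfrac12$, this yields
\[
\Delta\ell = -g_is_i + \tfrac12 H_{ii}s_i^2 + R,
\qquad
R = \int_0^1 (1-t)\big(\phi''(t)-\phi''(0)\big)\,dt .
\]

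Third, bound the remainder. We have
\[
|\phi''(t)-\phi''(0)| = s_i^2\,\big|\mathbf{e}_i^\top\big(\nabla^2\ell(\mathbf{s}-ts_i\mathbf{e}_i)-\nabla^2\ell(\mathbf{s})\big)\mathbf{e}_i\big| .
\]
Since $|\mathbf{e}_i^\top\mathbf{A}\mathbf{e}_i| \le \|\mathbf{A}\|_2$ and the Lipschitz assumption gives $\|\cdot\|_2 \le \rho\,\|ts_i\mathbf{e}_i\|_2 = \rho t|s_i|$, this is at most $\rho t |s_i|^3$. Therefore
\[
|R| \le \rho|s_i|^3\int_0^1 t(1-t)\,dt = \frac{\rho}{6}|s_i|^3 .
\]

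Finally, the triangle inequality gives
\[
|\Delta\ell| \le \left|-g_is_i+\tfrac12H_{ii}s_i^2\right| + \tfrac{\rho}{6}|s_i|^3 ,
\]
which is the claimed bound.

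The only delicate point is obtaining the constant $\tfrac16$ rather than a cruder $\tfrac12$. This is why the remainder is written in integral form against the weight $(1-t)$, instead of the Lagrange mean-value form, which would only give $\tfrac12\sup_t|\phi''(t)-\phi''(0)|\le \tfrac{\rho}{2}|s_i|^3$. Everything else is routine once the problem is reduced to the scalar function $\phi$.
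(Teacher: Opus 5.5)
Your proposal is correct and follows the paper's argument. The paper first proves a general lemma for an arbitrary step $\boldsymbol{\Delta}$: it uses the integral-form Taylor expansion with weight $(1-t)$ and the Lipschitz-Hessian remainder bound $\frac{\rho}{6}\|\boldsymbol{\Delta}\|_2^3$, then specializes to $\boldsymbol{\Delta}=-s_i\mathbf{e}_i$ and applies the triangle inequality, whereas you carry out the same computation directly along that single direction.
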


\begin{proof}
    We provide the proof in Appendix~\ref{appendix:proof-thm-loss-bound-on-basis-pruning}
\end{proof}

\begin{restatable}[Bound on Loss Change for Basis Pruning with Relative Hessian Error]{theorem}{LossBoundRelativeError}
\label{thm:basis_pruning_relative_hessian_error}
Let $ \ell:\mathbb{R}^r \to \mathbb{R}$ be the loss as a function of the basis singular
values $\mathbf{s}$ and assume the Hessian of $ \ell$ is $\rho$-Lipschitz,
Hessian estimate $\widehat H_{ii}$ has a maximum relative error of $\varepsilon$:
\[
\lvert \widehat H_{ii} - H_{ii} \rvert \le \varepsilon \lvert H_{ii} \rvert ,
\]
where \(0 < \varepsilon < 1\) and $H_{ii}$ is the $i$-th diagonal element of the Hessian.
Pruning basis $i$ leads to a change of the loss satisfying
\begin{align*}
\lvert \Delta  \ell \rvert
&=
\lvert  \ell(\mathbf{s} - s_i \mathbf{e}_i) -  \ell(\mathbf{s}) \rvert \\
&\le
\left|
- g_i s_i
+ \frac{1}{2} \widehat H_{ii} s_i^2
\right|
+
\frac{\varepsilon}{2(1-\varepsilon)}
\lvert \widehat H_{ii} \rvert s_i^2
+
\frac{\rho}{6}\lvert s_i \rvert^3 ,
\end{align*}
where $g_i$ is the gradient with respect to $s_i$ and
$\mathbf{e}_i = [0,\dots,0,1,0,\dots,0]^\top$ with $1$ on its i-th dimension
\end{restatable}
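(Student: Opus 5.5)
The plan is to reduce to Theorem~\ref{thm:basis_pruning_bound_on_loss} and then control the gap between the exact second-order term and its estimated counterpart. Theorem~\ref{thm:basis_pruning_bound_on_loss} applies unchanged under the same $\rho$-Lipschitz Hessian assumption, so we start from
\[
|\Delta \ell| \le \left|-g_i s_i + \tfrac{1}{2}H_{ii}s_i^2\right| + \tfrac{\rho}{6}|s_i|^3 .
\]
All that remains is to replace the exact $H_{ii}$ by the estimate $\widehat H_{ii}$ inside the first absolute value, paying an additive penalty expressed in terms of $\widehat H_{ii}$ (the quantity actually available) rather than $H_{ii}$.

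\textbf{Step 1 (split the second-order term).} Write $H_{ii} = \widehat H_{ii} + (H_{ii}-\widehat H_{ii})$ and apply the triangle inequality:
\[
\left|-g_i s_i + \tfrac{1}{2}H_{ii}s_i^2\right| \le \left|-g_i s_i + \tfrac{1}{2}\widehat H_{ii}s_i^2\right| + \tfrac{1}{2}|H_{ii}-\widehat H_{ii}|\,s_i^2 .
\]
The relative-error hypothesis then bounds the last term by $\tfrac{\varepsilon}{2}|H_{ii}|\,s_i^2$.

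\textbf{Step 2 (express $|H_{ii}|$ through $|\widehat H_{ii}|$).} This is the only step needing care, since the hypothesis is phrased relative to the unknown $|H_{ii}|$. By the reverse triangle inequality,
\[
|\widehat H_{ii}| \ge |H_{ii}| - |\widehat H_{ii}-H_{ii}| \ge (1-\varepsilon)|H_{ii}| .
\]
Because $0<\varepsilon<1$, the factor $1-\varepsilon$ is positive, so we may divide to get $|H_{ii}| \le |\widehat H_{ii}|/(1-\varepsilon)$. It follows that $|H_{ii}-\widehat H_{ii}| \le \varepsilon|H_{ii}| \le \tfrac{\varepsilon}{1-\varepsilon}|\widehat H_{ii}|$.

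\textbf{Step 3 (assemble).} Substituting the bound from Step~2 into Step~1 gives a penalty of $\tfrac{\varepsilon}{2(1-\varepsilon)}|\widehat H_{ii}|\,s_i^2$. Adding back the cubic remainder $\tfrac{\rho}{6}|s_i|^3$ from Theorem~\ref{thm:basis_pruning_bound_on_loss} yields exactly the claimed inequality.

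We expect no real obstacle. The only subtlety is Step~2: the requirement $\varepsilon<1$ is precisely what guarantees that $\widehat H_{ii}$ cannot be too small relative to $H_{ii}$, so the inversion is valid. We would also note the edge case $H_{ii}=0$. There the hypothesis forces $\widehat H_{ii}=0$, and the bound holds trivially.
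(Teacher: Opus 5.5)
Your proposal is correct and follows the paper's proof essentially step for step. Both start from Theorem~\ref{thm:basis_pruning_bound_on_loss}, add and subtract $\tfrac{1}{2}\widehat H_{ii}s_i^2$ inside the absolute value, apply the triangle inequality and the relative-error hypothesis, and then substitute $|H_{ii}| \le |\widehat H_{ii}|/(1-\varepsilon)$, which comes from the triangle inequality and the requirement $\varepsilon<1$. The only difference is ordering: the paper derives that last inequality first.
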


\begin{proof}
    See Appendix~\ref{appendix:proof-them-bound-on-loss-with-rel-error} for the proof. 
\end{proof}


\subsubsection{Convergence of Hessian Diagonal Estimator \label{sec:variance_convergence}}

The neural network Hessian has a structural property that does not generally hold for an arbitrary matrix. This property appears in the decay of its eigenvalues. As shown in prior work \citep{Sagun2018Hessian, tang2025investigating}, the eigenvalues of neural network Hessians exhibit power-law decay. We state this behavior as Assumption~\ref{assump:spectral-power-law}. In Appendix~\ref{sec:sv_hessian_spectrum}, we provide empirical evidence that this decay also appears in the Hessian of the reparameterized model in the singular-value space. Under this assumption, Theorem~\ref{thm:hutchinson-variance-dimensionality} provides a bound on the total variance of the diagonal estimator for neural network Hessians.



\begin{assumption}[Spectral Power-Law Decay]
\label{assump:spectral-power-law}
Let $\mathbf{H} \in \mathbb{R}^{n \times n}$ be a symmetric matrix with eigenvalues
$|\lambda_1| \ge |\lambda_2| \ge \dots \ge |\lambda_n|$.
We assume there exist $\alpha > \tfrac{1}{2}$ such that
\[
|\lambda_k| \le |\lambda_1| k^{-\alpha}, \qquad \text{for all } k \ge 1.
\]
\end{assumption} 
\begin{restatable}[Variance of Hessian Estimator]{theorem}{VarDimensionalityHessian}
\label{thm:hutchinson-variance-dimensionality}
Let $\mathbf{H} \in \mathbb{R}^{n \times n}$ be the Hessian of a neural network,
and let ${\mathrm{D}}^{(s)}(\mathbf{H}) = \frac{1}{s} \sum_{k=1}^s \mathbf{z}^{(k)} \odot (\mathbf{H}\mathbf{z}^{(k)})$ be the diagonal estimator  computed using $s$ number of independent Rademacher vectors
$\mathbf{z}^{(1)},\dots,\mathbf{z}^{(s)} \in \{-1,1\}^n$.
Under spectral power-law decay (Assumption~\ref{assump:spectral-power-law}), we have
\(
\sum_{i=1}^n \mathrm{Var}\!\big(\mathrm{D}^{(s)}(\mathbf{H})_i\big)
\le \frac{1}{s}|\lambda_1|^2 \zeta(2\alpha).
\)
This upper bound is independent of the dimension $n$.
\end{restatable}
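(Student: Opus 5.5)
We first compute the variance of a single probe coordinate and then sum over coordinates. For one Rademacher vector $\mathbf{z}$, the $i$-th coordinate of $\mathbf{z}\odot(\mathbf{H}\mathbf{z})$ is
\[
z_i(\mathbf{H}\mathbf{z})_i = \sum_j H_{ij} z_i z_j = H_{ii} + \sum_{j\neq i} H_{ij} z_i z_j,
\]
since $z_i^2=1$. The variables $\{z_i z_j\}_{j\neq i}$ have mean zero and are pairwise uncorrelated: for $j\neq k$, both different from $i$, we have $\mathbb{E}[z_i z_j z_i z_k]=\mathbb{E}[z_j]\mathbb{E}[z_k]=0$. They also satisfy $(z_iz_j)^2=1$. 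Hence the single-probe variance is $\sum_{j\neq i} H_{ij}^2$. Because the $s$ probes are independent and identically distributed, averaging divides this by $s$:
\[
\mathrm{Var}\big(\mathrm{D}^{(s)}(\mathbf{H})_i\big)=\frac{1}{s}\sum_{j\neq i}H_{ij}^2 .
\]

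Next we sum over $i$. This gives $\frac{1}{s}\|\bar{\mathbf{H}}\|_F^2$, where $\bar{\mathbf{H}}$ is $\mathbf{H}$ with its diagonal zeroed, as in the preliminaries. We bound it by the full Frobenius norm:
\[
\|\bar{\mathbf{H}}\|_F^2 = \|\mathbf{H}\|_F^2 - \sum_i H_{ii}^2 \le \|\mathbf{H}\|_F^2 .
\]
Because $\mathbf{H}$ is symmetric, Lemma~\ref{lem:entrywise-spectral} gives $\mathbf{H}=\sum_k \lambda_k \mathbf{u}_k\mathbf{u}_k^\top$ with orthonormal $\mathbf{u}_k$. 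Using this orthonormality,
\[
\|\mathbf{H}\|_F^2=\sum_{i,j}\Big(\sum_k\lambda_k u_{ki}u_{kj}\Big)^2=\sum_{k}\lambda_k^2 ,
\]
since $\sum_i u_{ki}u_{k'i}=\delta_{kk'}$ (equivalently, $\|\mathbf{H}\|_F^2=\mathrm{tr}(\mathbf{H}^2)$).

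Finally we apply Assumption~\ref{assump:spectral-power-law}:
\[
\sum_{k=1}^n\lambda_k^2\le|\lambda_1|^2\sum_{k=1}^n k^{-2\alpha}=|\lambda_1|^2H_{n,2\alpha}\le|\lambda_1|^2\zeta(2\alpha).
\]
The last inequality holds because $H_{n,2\alpha}$ is a partial sum of positive terms and $2\alpha>1$, so the zeta series converges (Definitions~\ref{def:zeta} and~\ref{def:harmonic}). Chaining the three displays yields the claimed bound, and its right-hand side does not involve $n$.

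The only delicate step is the per-coordinate variance computation: one must verify that the cross terms vanish using independence and zero mean of distinct Rademacher entries, and note that the diagonal term is deterministic and so contributes no variance. Everything else is bookkeeping.
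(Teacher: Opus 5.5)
Your proposal is correct and follows the same route as the paper: compute the per-coordinate variance $\frac{1}{s}\sum_{j\neq i}H_{ij}^2$, sum over $i$ and drop the diagonal to get $\frac{1}{s}\|\mathbf{H}\|_F^2=\frac{1}{s}\sum_k\lambda_k^2$, then apply the power-law bound and $H_{n,2\alpha}\le\zeta(2\alpha)$. Your argument via pairwise uncorrelatedness of $\{z_iz_j\}_{j\neq i}$ and i.i.d.\ averaging over probes is a more compact version of the paper's explicit expansion of the squared sum into cross-probe and within-probe terms.
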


\begin{proof}
    We present this proof in Appendix~\ref{apdx:variance-dimensionality-convergence-proof}
\end{proof}
For a general fixed matrix $\mathbf{A} \in \mathbb{R}^{n \times n}$, the Hutchinson diagonal estimator is unbiased, and its mean-square error satisfies
\(
\mathbb{E}\!\left[\|{\mathrm{D}}^{(s)}(\mathbf{A}) - \operatorname{diag}(\mathbf{A})\|_2^2\right]
= \sum_{i=1}^n \operatorname{Var}\!\big({\mathrm{D}}^{(s)}(\mathbf{A})_i\big)
\le \frac{1}{s}\|\mathbf{A}\|_F^2,
\)
which in general grows with the ambient dimension $n$. Consequently, worst-case guarantees for diagonal estimation typically deteriorate as the matrix dimension increases, unless additional structure information is imposed. However, when $\mathbf{A}$ arises as a neural network Hessian and its spectrum exhibits power-law decay, Theorem~\ref{thm:hutchinson-variance-dimensionality} shows that the bound on the total variance of the diagonal estimator is controlled by the spectral decay rather than by the dimension.

\subsubsection{A Probabilistic Guarantee Of Hessian Diagonal Estimator\label{sec:prob_bound}}

Variance analyses such as those in Section~\ref{sec:variance_convergence} characterize convergence in expectation, but can be sensitive to large deviations (outliers) in the stochastic estimates. In this section, we derive an $(\varepsilon,\delta)$ probabilistic guarantee on the number of samples needed to estimate the Hessian diagonal. Definition~\ref{def:eps-delta-approximator} formalizes this notion by requiring that the relative error exceeds $\varepsilon$ with probability at most $\delta$. 
Theorem~\ref{theorem:prob-hessian-diagonal} bounds the number of samples $s$ needed for $\mathrm{D}^{(s)}$ to be a relative $(\varepsilon,\delta)$-approximator in terms of Hessian spectrum structure. Corollary~\ref{corollary:prob-bound-PSD} presents a bound on the minimum number of samples necessary to reliably estimate the diagonal of a positive semidefinite Hessian.

\begin{definition}[Relative $(\varepsilon,\delta)$-approximator]
\label{def:eps-delta-approximator}
For any $\varepsilon > 0$ and $0 < \delta < 1$,
the diagonal estimator $\mathrm{D}^{(s)}(\mathbf{A})$ is a relative $(\varepsilon,\delta)$-approximator if 
\(
\text{ }
\Pr\!\left(
\left\| \mathrm{D}^{(s)}(\mathbf{A}) - \operatorname{diag}(\mathbf{A}) \right\|_2
\le
\varepsilon \left\| \operatorname{diag}(\mathbf{A}) \right\|_2
\right)
\ge 1 - \delta,
\)
where $s$ is the number of samples and $\operatorname{diag}(\mathbf{A})$ denotes the diagonal of the matrix $\mathbf{A}$.
\end{definition}

\begin{restatable}[Probabilistic bound for Hessian Diagonal]{theorem}{ProbHessianDiagonal}
\label{theorem:prob-hessian-diagonal}
Assume $\mathbf{H}\in\mathbb{R}^{n\times n}$ is symmetric, $\operatorname{Tr}(\mathbf{H})\neq 0$, and the power-law decay condition (Assumption~\ref{assump:spectral-power-law}) holds,
then the Rademacher diagonal estimator $\mathrm{D}^{(s)}(\mathbf{H})$ is a relative
$(\varepsilon,\delta)$-approximator if
\[
s
>
2\left(
\frac{|\lambda_1^2| H_{n,2\alpha}}{\tfrac{1}{n}\operatorname{Tr}(\mathbf{H})^2}
- 1
\right)
\left(
\frac{\ln(2n/\delta)}{\varepsilon^2}
\right)
=
2\left(
\frac{|\lambda_1^2| H_{n,2\alpha}}{\tfrac{1}{n}\left(\sum_{i=1}^n \lambda_i\right)^2}
- 1
\right)
\left(
\frac{\ln(2n/\delta)}{\varepsilon^2}
\right),
\
\]
where $H_{n,2\alpha}$ is the generalized harmonic number defined in Definition~\ref{def:harmonic}.
\end{restatable}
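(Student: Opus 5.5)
The plan is to apply a Hoeffding-type concentration inequality coordinatewise, take a union bound over the $n$ coordinates, and then convert the resulting per-coordinate sample requirement into the stated spectral form. The $\ln(2n/\delta)$ factor is exactly what a two-sided union bound over $n$ coordinates produces.

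\textbf{Per-coordinate concentration.} Write the $i$-th coordinate of a single probe as
\[
z_i(\mathbf{H}\mathbf{z})_i = H_{ii} + \sum_{j\neq i} H_{ij} z_i z_j .
\]
Conditioned on $z_i$, the fluctuation $\sum_{j\ne i} H_{ij} z_i z_j$ is a Rademacher sum, and unconditionally it is sub-Gaussian with variance proxy $\sum_{j\ne i}H_{ij}^2 = \|\bar{\mathbf{H}}_{i,:}\|_2^2$. Averaging $s$ i.i.d.\ probes, Hoeffding gives
\[
\Pr\big(|\mathrm{D}^{(s)}(\mathbf{H})_i - H_{ii}| > t_i\big) \le 2\exp\!\big(-s t_i^2/(2\|\bar{\mathbf{H}}_{i,:}\|_2^2)\big).
\]
Choose $t_i^2 = 2\|\bar{\mathbf{H}}_{i,:}\|_2^2\ln(2n/\delta)/s$ and union-bound over $i$. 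With probability at least $1-\delta$ this yields
\[
\|\mathrm{D}^{(s)}(\mathbf{H})-\operatorname{diag}(\mathbf{H})\|_2^2 \le \frac{2\ln(2n/\delta)}{s}\,\|\bar{\mathbf{H}}\|_F^2 .
\]
So it suffices that
\[
s \ge \frac{2\|\bar{\mathbf{H}}\|_F^2}{\|\operatorname{diag}(\mathbf{H})\|_2^2}\cdot\frac{\ln(2n/\delta)}{\varepsilon^2}.
\]

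\textbf{Spectral conversion.} Use $\|\bar{\mathbf{H}}\|_F^2 = \|\mathbf{H}\|_F^2 - \|\operatorname{diag}(\mathbf{H})\|_2^2$, which gives
\[
\frac{\|\bar{\mathbf{H}}\|_F^2}{\|\operatorname{diag}(\mathbf{H})\|_2^2} = \frac{\|\mathbf{H}\|_F^2}{\|\operatorname{diag}(\mathbf{H})\|_2^2} - 1 .
\]
For the numerator, $\|\mathbf{H}\|_F^2=\sum_k\lambda_k^2\le|\lambda_1|^2\sum_{k=1}^n k^{-2\alpha}=|\lambda_1|^2H_{n,2\alpha}$ by Assumption~\ref{assump:spectral-power-law}; this is the finite-$n$ analogue of the computation behind Theorem~\ref{thm:hutchinson-variance-dimensionality}. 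For the denominator, Cauchy--Schwarz gives $\|\operatorname{diag}(\mathbf{H})\|_2^2 \ge \frac1n(\sum_i H_{ii})^2 = \frac1n\operatorname{Tr}(\mathbf{H})^2$, which is positive because $\operatorname{Tr}(\mathbf{H})\neq0$. Substituting these two bounds, and noting that increasing the ratio only strengthens the sufficient condition, gives the stated threshold. The second form follows from $\operatorname{Tr}(\mathbf{H})=\sum_i\lambda_i$.

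\textbf{Main obstacle.} The hard step is the Hoeffding constant. The summands $H_{ij}z_iz_j$ for $j\ne i$ are not independent of each other across $j$ through $z_i$. I handle this by conditioning on $z_i$, after which the terms $H_{ij}z_iz_j$ with $z_i$ fixed are independent bounded variables. The conditional tail bound does not depend on $z_i$, so it holds unconditionally. The factor $2$ in front of the bound comes from the sub-Gaussian parameter $\sum_j H_{ij}^2$ in $\exp(-st^2/(2\sum_j H_{ij}^2))$, so I need to check that the constants line up exactly with the stated $2(\cdot)\ln(2n/\delta)/\varepsilon^2$.
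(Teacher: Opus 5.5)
Your proposal is correct and matches the paper's proof. The spectral step ($\|\mathbf{H}\|_F^2=\sum_k\lambda_k^2\le|\lambda_1|^2H_{n,2\alpha}$, and $\|\operatorname{diag}(\mathbf{H})\|_2^2\ge\frac{1}{n}\operatorname{Tr}(\mathbf{H})^2$ by Cauchy--Schwarz) is identical; the only difference is that the paper takes the sample bound $s>2\bigl(\|\mathbf{H}\|_F^2-\|\operatorname{diag}(\mathbf{H})\|_2^2\bigr)\|\operatorname{diag}(\mathbf{H})\|_2^{-2}\ln(2n/\delta)/\varepsilon^2$ as a black-box lemma from Baston and Nakatsukasa, whereas you re-derive it via Hoeffding and a union bound with row-dependent thresholds $t_i$. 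Your constants do line up, since summands in $[-|H_{ij}|,|H_{ij}|]$ give exactly $2\exp\bigl(-st^2/(2\sum_{j\neq i}H_{ij}^2)\bigr)$, and your dependence worry is unnecessary: $\bigl(z_i,(z_iz_j)_{j\neq i}\bigr)$ is a bijective image of a uniform sign vector, so the products $\{z_iz_j\}_{j\neq i}$ are already i.i.d.\ Rademacher.
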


\begin{proof}
    We show the detailed proof in Appendix~\ref{appendix:prob-bound-section}.
\end{proof}

\begin{restatable}[Positive Semidefinite Hessian Diagonal Estimator Probabilistic Bound]{corollary}{ProbBoundPSDHessian}
\label{corollary:prob-bound-PSD}
If the power law decay condition (Assumption~\ref{assump:spectral-power-law}) holds and that $\mathbf{H}$ is positive semidefinite
with $\lambda_1 \neq 0$,
then the Rademacher diagonal estimator $\mathrm{D}^{(s)}(\mathbf{H})$ is a relative
$(\varepsilon,\delta)$-approximator if
\[
s
>
2\left(
n\,\zeta(2\alpha) - 1
\right)
\left(
\frac{\ln(2n/\delta)}{\varepsilon^2}
\right).
\]
\end{restatable}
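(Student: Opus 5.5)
The plan is to derive the corollary directly from Theorem~\ref{theorem:prob-hessian-diagonal}. The only task is to show that, for a positive semidefinite $\mathbf{H}$, the spectral factor
\[
\frac{|\lambda_1|^2 H_{n,2\alpha}}{\tfrac{1}{n}\operatorname{Tr}(\mathbf{H})^2}
\]
is bounded above by $n\,\zeta(2\alpha)$. Once this holds, any $s$ exceeding $2(n\zeta(2\alpha)-1)\ln(2n/\delta)/\varepsilon^2$ also exceeds the threshold in Theorem~\ref{theorem:prob-hessian-diagonal}, and the conclusion follows.

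\textbf{Step 1: check the hypotheses of Theorem~\ref{theorem:prob-hessian-diagonal}.} Because $\mathbf{H}$ is positive semidefinite, every eigenvalue satisfies $\lambda_k \ge 0$. Hence $\operatorname{Tr}(\mathbf{H}) = \sum_k \lambda_k \ge \lambda_1 > 0$, using $\lambda_1 \neq 0$. In particular $\operatorname{Tr}(\mathbf{H}) \neq 0$, symmetry is given, and the power-law assumption is assumed. So Theorem~\ref{theorem:prob-hessian-diagonal} applies.

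\textbf{Step 2: bound the factor.} Nonnegativity of the eigenvalues gives $\operatorname{Tr}(\mathbf{H})^2 \ge \lambda_1^2 = |\lambda_1|^2$. Therefore
\[
\frac{|\lambda_1|^2 H_{n,2\alpha}}{\tfrac{1}{n}\operatorname{Tr}(\mathbf{H})^2} \le n\,H_{n,2\alpha}.
\]
Since $2\alpha > 1$ and the terms are positive, $H_{n,2\alpha}$ is a partial sum of the convergent series in Definition~\ref{def:zeta}. It increases monotonically to $\zeta(2\alpha)$, so $H_{n,2\alpha} \le \zeta(2\alpha)$. Subtracting $1$ and multiplying by the positive quantity $2\ln(2n/\delta)/\varepsilon^2$ preserves the inequality. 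This makes the corollary's threshold at least the theorem's threshold.

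There is no real obstacle here. The only points that need care are the sign arguments. First, positivity of the trace is what allows $\operatorname{Tr}(\mathbf{H})^2 \ge \lambda_1^2$; this fails for indefinite Hessians, where eigenvalues can cancel. Second, $\ln(2n/\delta) > 0$ because $\delta < 1 \le 2n$, which is needed so that multiplying by it does not flip the inequality.
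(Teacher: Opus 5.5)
Your proposal is correct and follows the paper's proof essentially step for step. Positive semidefiniteness gives $\operatorname{Tr}(\mathbf{H}) \ge \lambda_1 > 0$, which verifies the hypothesis of Theorem~\ref{theorem:prob-hessian-diagonal} and bounds its spectral factor by $n H_{n,2\alpha} \le n\,\zeta(2\alpha)$; your explicit check that $\ln(2n/\delta)>0$ is a small bit of care the paper leaves implicit.
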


\begin{proof}
   We prove this in Appendix~\ref{appendix:prob-bound-subsection}.
\end{proof}

Our probabilistic bound goes beyond generic worst-case matrix bounds, which are often impractical because they depend on information about the implicit matrix that is typically unavailable. In the specific case of neural network Hessians, eigenvalues are widely observed to exhibit power-law decay \citep{tang2025investigating}. By leveraging this spectral structure, our bound yields a practically computable sample complexity.

\section{Application}
In this section, we provide empirical evidence to support the theoretical findings of BSI. Our primary goal is to demonstrate that BSI yields improved performance in practical settings compared to existing low-rank compression techniques.
\subsection{Evaluation Setup}
\label{subsec:datasets_model}

Mathematical reasoning is one of the most challenging areas for LLM evaluation because it requires solving non-trivial problems. We evaluate the performance of low-rank compression algorithms across two distinct mathematical reasoning datasets: GSM8K \citep{gsm8k} and Hendrycks' MATH \citep{hendrycks}. These experiments are conducted using the Llama 2-7B~\citep{llama2}. We benchmark our method against four representative low-rank decomposition techniques. We consider SVD~\citep{LASER, SVD0, SVD1, SVD2, SVD3, SVD4}, FWSVD, which augments SVD with gradient-based profiling on the fine-tuning dataset during decomposition~\citep{hsu2022fwswd}, ASVD, which seeks a low-rank weight approximation that minimizes activation reconstruction loss~\citep{asvd}, and Basel, a state-of-the-art approach that exploits the semantic structure of weight matrices for compression~\citep{li2025basel}. To ensure a rigorous comparison, each model undergoes a three-stage pipeline: (1) task-specific fine-tuning of the base model, (2) low-rank compression via a candidate algorithm, and (3) post-compression fine-tuning for task adaptation.


\subsection{Evaluation on Mathematical Reasoning}
Figures~\ref{fig:7b-math} (a) and (b) show the performance of Llama-2-7B on mathematical reasoning tasks across different low-rank compression techniques. At low compression ratios\footnote{The compression ratio is defined as the quotient of the original model size divided by the compressed model size.} (below approximately 5×), all methods achieve comparable performance on both GSM8K and Hendrycks' MATH. As compression becomes more aggressive, however, a clear divergence emerges. SVD, ASVD and FWSVD degrade rapidly, with FWSVD exhibiting an early collapse in accuracy. In contrast, our BSI method degrades more gracefully and consistently preserves higher accuracy. 
On GSM8K, at 16$\times$ compression, the strongest baseline retains only about 9\% accuracy, whereas BSI maintains roughly 27\%. Moreover, on both GSM8K and Hendrycks’ MATH, BSI at a 19.5$\times$ compression ratio matches the accuracy of Basel at 13.8$\times$ on GSM8K and 14.9$\times$ on MATH. In other words, BSI supports up to about 1.3$\times$ higher compression than Basel while maintaining comparable accuracy.

\begin{figure}[t]
\centering
\begin{minipage}[b]{0.48\textwidth}
  \centering
  \includegraphics[width=\linewidth]{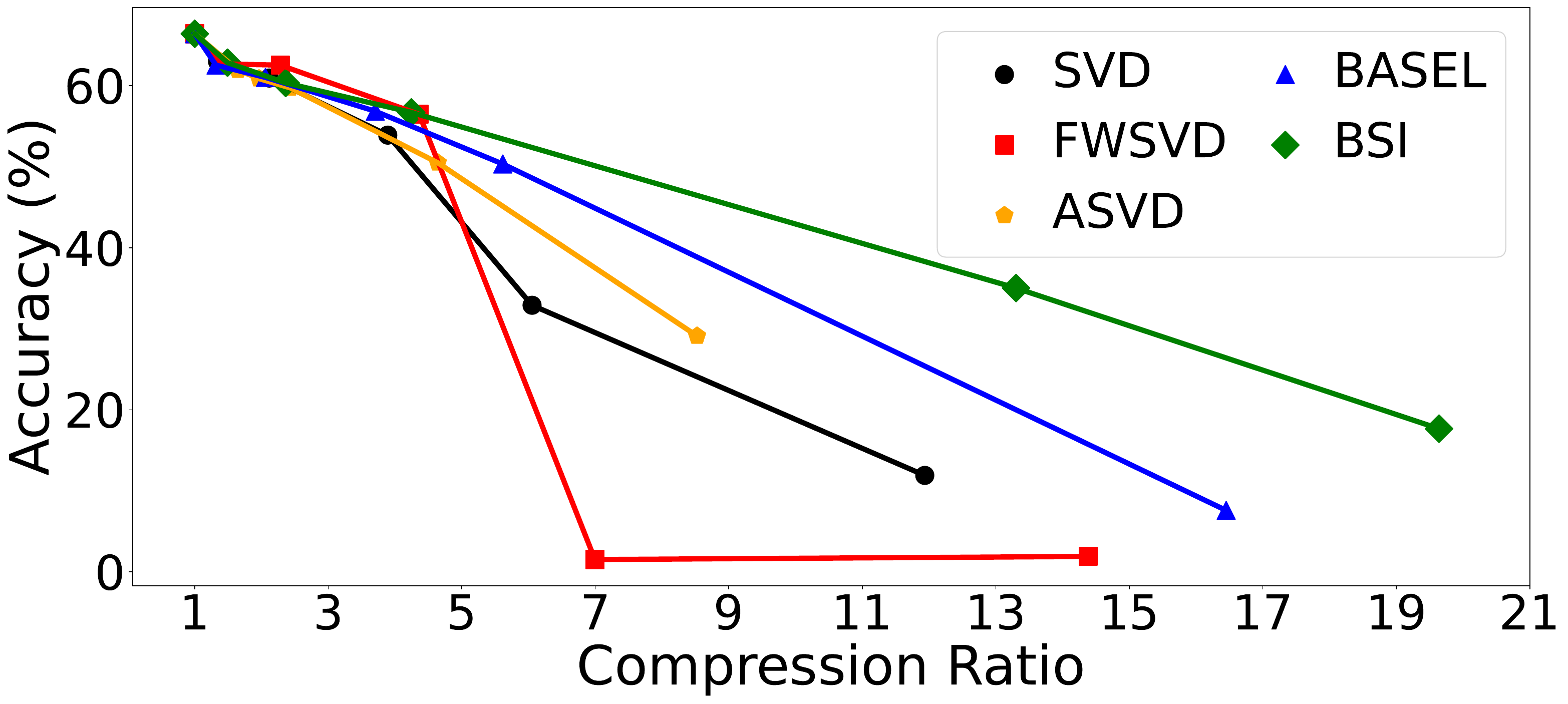}\\
  \small (a) GSM8K
\end{minipage}\hfill
\begin{minipage}[b]{0.48\textwidth}
  \centering
  \includegraphics[width=\linewidth]{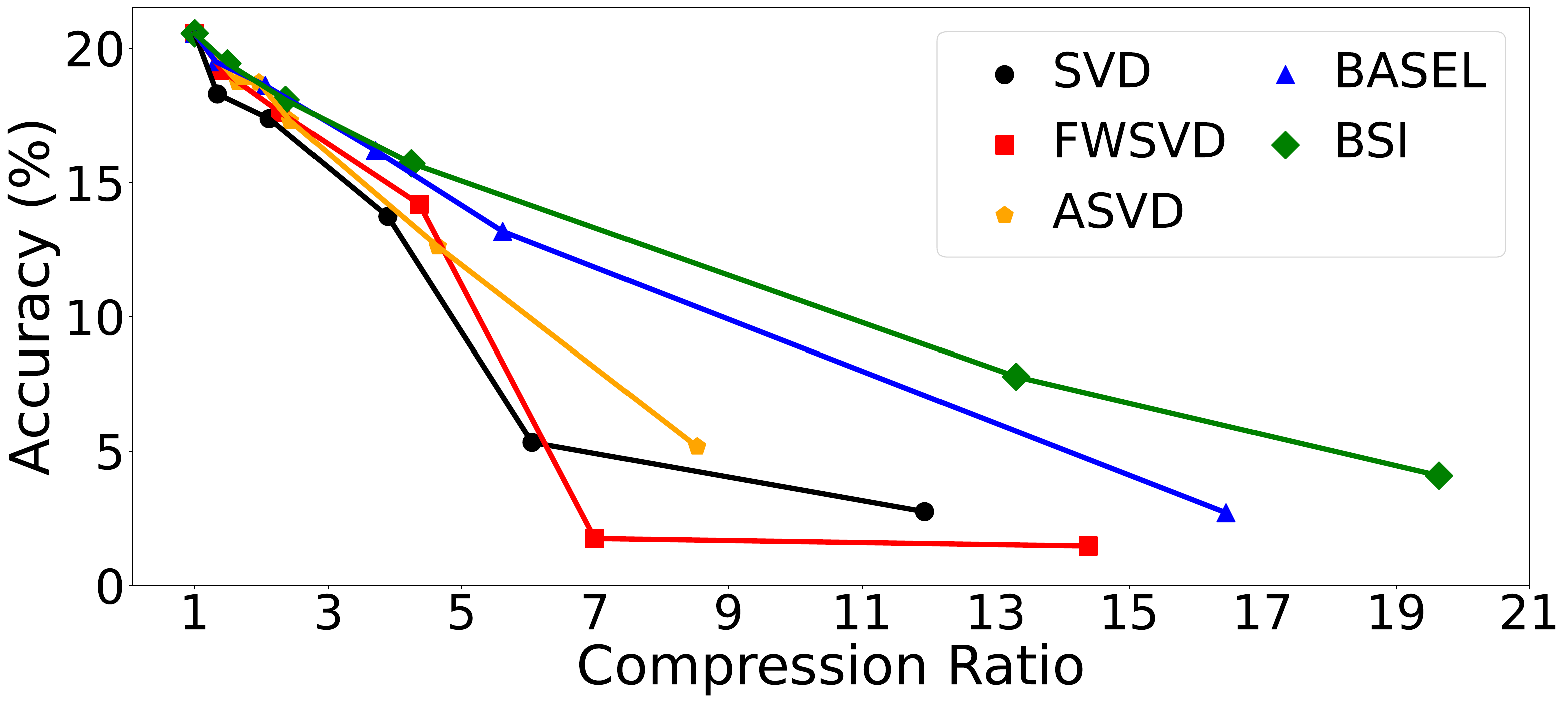}\\
  \small (b) MATH
\end{minipage}
\caption{Accuracy and model size of Llama 2-7B compressed with various low-rank decomposition algorithms on mathematical reasoning tasks.}
\label{fig:7b-math}
\end{figure}

\subsection{Ablation Study}
\label{sec:hessian_ablation}

The BSI importance metric used for basis selection includes both first-order gradient information and second-order Hessian information. To isolate the contribution of the Hessian term, we conduct an ablation study comparing the proposed BSI importance metric with a first-order variant that uses only the gradient term. Specifically, for basis \(i\), the gradient-only importance metric is defined as
\(
I_i \triangleq -\sigma_i\,\mathbb{E}\!\left[\frac{\partial \ell}{\partial \sigma_i}\right].
\) We perform this ablation study using Llama 2-7B on the Hendrycks' MATH and GSM8K datasets. The results are reported in Figure~\ref{fig:BSI_w_o_hessian_7b-math} (Appendix~\ref{app:hessian_ablation}). Across both datasets, the proposed BSI  consistently outperforms the gradient-only variant, demonstrating that the second-order term provides useful curvature information for selecting important singular bases. These results validate the role of the Hessian component in the proposed importance metric.



\section{Conclusion}


In this work, we set out to better understand the basis selection problem in low-rank compression---specifically, what is actually lost when individual SVD bases are removed. This motivation led to BSI, a framework that moves beyond magnitude-based pruning and instead grounds basis selection directly in the loss landscape. BSI selects and prunes bases using an importance score that estimates the expected change in loss caused by removing each basis. Our theoretical results show that the second-order information required by BSI---namely, the Hessian diagonal---can be estimated efficiently and reliably at scale. Leveraging the spectral structure of the Hessian (including power-law decay), we relate both the estimator's variance and its high-probability $(\varepsilon,\delta)$ error guarantee to the spectrum, thereby clarifying the sample complexity needed for accurate estimation. Beyond estimator analysis, we also bound the loss change induced by basis pruning and characterize how Hessian-diagonal estimation error propagates into this bound. Experiments on mathematical reasoning tasks demonstrate that BSI achieves a stronger compression--accuracy trade-off, compressing models by up to $1.3\times$ more than state-of-the-art low-rank baselines while maintaining similar performance.


\bibliographystyle{plainnat}
\bibliography{references}
\newpage
\noindent\textbf{Appendix Table of Contents}
\vspace{0.6em}

\noindent\textbf{Appendix A: Theoretical Analysis of Basis Selection with Importance}
\vspace{0.3em}

\noindent\begin{tabular}{@{}p{0.94\linewidth}r@{}}
\hspace{1.5em}A.1\quad Efficient Estimation of the Hessian Diagonal via Perturbations \dotfill
& \pageref{apdx:hessian_diagona_estimate_proof} \\

\hspace{1.5em}A.2\quad Bound on the Loss Change \dotfill
& \pageref{appendix:loss-bound-sec} \\

\hspace{1.5em}A.3\quad Variance of the Hessian Diagonal Estimator \dotfill
& \pageref{apdx:variance-dimensionality-convergence-proof} \\

\hspace{1.5em}A.4\quad Probabilistic Guarantee for Hessian Diagonal Estimation \dotfill
& \pageref{appendix:prob-bound-section} \\

\hspace{1.5em}A.5\quad Non-negativity of the Hessian Diagonal of the Output Layer \dotfill
& \pageref{appendix:proof-prop-lm-head}
\end{tabular}

\vspace{0.8em}

\noindent\textbf{Appendix B: Hessian Spectrum in the Singular-Value Space}
\leaders\hbox{.}\hfill
\pageref{sec:sv_hessian_spectrum}

\vspace{0.8em}

\noindent\textbf{Appendix C: BSI Practical Design Choices}
\vspace{0.3em}

\noindent\begin{tabular}{@{}p{0.94\linewidth}r@{}}
\hspace{1.5em}C.1\quad Choosing Perturbation Intensity \dotfill
& \pageref{appendix:perturbation-intensity} \\

\hspace{1.5em}C.2\quad Candidate Pool for Hessian Estimate \dotfill
& \pageref{sec:candidate_pool}
\end{tabular}

\vspace{0.8em}

\noindent\textbf{Appendix D: BSI Application}
\vspace{0.3em}

\noindent\begin{tabular}{@{}p{0.94\linewidth}r@{}}
\hspace{1.5em}D.1\quad Evaluation of BSI on Mathematical Reasoning \dotfill
& \pageref{appendix:math-results}
\end{tabular}
\newpage
\appendix

\section{Theoretical Analysis of Basis Selection with Importance
\label{appndix:theory_proofs}}
This section contains all proofs underlying the theoretical analysis of BSI. 
(i) We present an efficient perturbation-based estimator for the diagonal elements of the Hessian (Appendix~\ref{apdx:hessian_diagona_estimate_proof}). 
(ii) We derive bounds on the loss change induced by basis pruning in two settings: one with an exact Hessian and another with estimation error (Appendix~\ref{appendix:loss-bound-sec}). 
(iii) We establish a bound on the variance of the estimator under power-law spectral decay (Appendix~\ref{apdx:variance-dimensionality-convergence-proof}). 
(iv) We provide a high-probability bound on the sample complexity of the diagonal estimator, explicitly tied to the spectral structure of the Hessian (Appendix~\ref{appendix:prob-bound-section}), and further specialize this result to the positive semidefinite case (Appendix~\ref{appendix:prob-bound-subsection}). 
(v) Finally, we prove the non-negativity of the Hessian diagonal for the output layer of large language models (Appendix~\ref{appendix:proof-prop-lm-head}).

\subsection{Efficient Estimation of the Hessian Diagonal via Perturbations \label{apdx:hessian_diagona_estimate_proof}}

The importance score used in BSI contains a second-order term involving the
diagonal of the Hessian with respect to the singular values.
Explicit computation of this quantity is infeasible for large models due to
the prohibitive cost of forming or storing the full Hessian. Theorem~\ref{thm:hessian_diag_est} provides an efficient and reliable estimator for the
Hessian diagonal that relies only on gradient which can be computed efficiently for large models. The key idea is to use symmetric perturbations and exploit the fact that cross-coordinate interactions are canceled in expectation under Rademacher perturbations. This leaves only the diagonal of the Hessian, allowing us to estimate the Hessian diagonal efficiently using only gradient computations at perturbed values.

\HessDiagEstimate*
\begin{proof}
\begin{align*}
\text{RHS}
\;&=\;
\mathbb{E}_{\substack{\mathbf{z} \sim \mathcal{R}(\mathcal{Z}) \\ d \sim \mathcal{D}}}
\left[
\left(
\left.\frac{\partial^2 \ell(d)}{\partial \boldsymbol{\sigma}^2}\right|_{\boldsymbol{\sigma}=\boldsymbol{\sigma}^\ast}
(\epsilon\odot \mathbf{z}) + O(\epsilon^3)
\right)_i
\cdot
\frac{z_i}{\epsilon}
\right] \\
&=\;
\mathbb{E}_{\substack{\mathbf{z}\sim \mathcal{R}(\mathcal{Z})\\ d\sim \mathcal{D}}}
\left[
\left(
\sum_{j}
\left(
\left.\frac{\partial^{2}\ell(d)}{\partial \boldsymbol{\sigma}^{2}}\right|_{\boldsymbol{\sigma}=\boldsymbol{\sigma}^{\ast}}
\right)_{ij}
\, \epsilon \cdot z_j
\right)
\cdot
\frac{z_i}{\epsilon}
\right] 
\;+\; O(\epsilon^{2})\\
&=\;
\sum_{j}
\mathbb{E}_{\substack{\mathbf{z}\sim \mathcal{R}(\mathcal{Z})\\ d\sim \mathcal{D}}}
\left[
\left(
\left.\frac{\partial^{2}\ell(d)}{\partial \boldsymbol{\sigma}^{2}}\right|_{\boldsymbol{\sigma}=\boldsymbol{\sigma}^{\ast}}
\right)_{ij}
z_i z_j
\right] \;+\; O(\epsilon^{2}).
\end{align*}
Since $z_i$ and $z_j$ are i.i.d., for $i \neq j$
\[
\mathbb{E}[z_i z_j] = \mathbb{E}[z_i]\mathbb{E}[z_j] = 0
\]
\begin{align*}
\text{RHS}
\;&=\;
\mathbb{E}_{\substack{\mathbf{z}\sim \mathcal{R}(\mathcal{Z})\\ d\sim \mathcal{D}}}
\left[
\left(
\left.\frac{\partial^{2}\ell(d)}{\partial \boldsymbol{\sigma}^{2}}\right|_{\boldsymbol{\sigma}=\boldsymbol{\sigma}^{\ast}}
\right)_{ii}
\, z_i^{2}
\right]
\;+\; O(\epsilon^{2})\\
&=\;
\mathbb{E}_{d\sim \mathcal{D}}
\left[
\left(
\left.
\frac{\partial^{2}\ell(d)}{\partial \boldsymbol{\boldsymbol{\sigma}}^{2}}
\right|_{\boldsymbol{\sigma}=\sigma^{\ast}}
\right)_{ii}
\right] 
\;
\mathbb{E}_{\mathbf{z}\sim \mathcal{R}(\mathcal{Z})}[z_i^{2}]  \;+\; O(\epsilon^{2})\\
&=\;
\mathbb{E}_{d\sim \mathcal{D}}
\left[
\left(
\left.
\frac{\partial^{2}\ell(d)}{\partial \boldsymbol{\sigma}^{2}}
\right|_{\boldsymbol{\sigma}=\boldsymbol{\sigma}^{\ast}}
\right)_{ii}
\right] \;+\; O(\epsilon^{2}) \\
\;
&=\; \text{LHS}.
\end{align*}

\end{proof}

\subsection{Bound on the Loss Change\label{appendix:loss-bound-sec}}
The bounds on the loss change induced by pruning a basis presented in Theorems~\ref{thm:basis_pruning_bound_on_loss} and \ref{thm:basis_pruning_relative_hessian_error} rely on the following lemma:

\begin{lemma}[Bound on the Loss Change]
\label{lem:taylor_lipschitz_hessian}
Assume function $\ell:\mathbb{R}^r\to\mathbb{R}$ is $C^2$, and its Hessian is $\rho$-Lipschitz in operator norm on the line segment
$\{\mathbf{s}+t\boldsymbol{\Delta}\mid t\in[0,1]\}$:
\[
\left\|
\nabla^2 \ell(\mathbf{x})-\nabla^2 \ell(\mathbf{y})
\right\|_2
\le
\rho \left\|\mathbf{x}-\mathbf{y}\right\|_2
\qquad
\text{for all $\mathbf{x},\mathbf{y}$ on the segment.}
\]
Then
\(
\ell(\mathbf{s}+\boldsymbol{\Delta})
=
\ell(\mathbf{s})
+
\nabla \ell(\mathbf{s})^\top \boldsymbol{\Delta}
+
\frac{1}{2}\boldsymbol{\Delta}^\top \nabla^2 \ell(\mathbf{s})\,\boldsymbol{\Delta}
+
R(\boldsymbol{\Delta}),
\)
with the cubic remainder bound
\[
\left|R(\boldsymbol{\Delta})\right|
\le
\frac{\rho}{6}\left\|\boldsymbol{\Delta}\right\|_2^3.
\]
\end{lemma}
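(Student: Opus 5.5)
The plan is to reduce the multivariate statement to a one-dimensional one along the segment. Define $\phi(t) \coloneqq \ell(\mathbf{s}+t\boldsymbol{\Delta})$ for $t\in[0,1]$. Since $\ell$ is $C^2$, $\phi$ is $C^2$ with $\phi'(t)=\nabla\ell(\mathbf{s}+t\boldsymbol{\Delta})^\top\boldsymbol{\Delta}$ and $\phi''(t)=\boldsymbol{\Delta}^\top\nabla^2\ell(\mathbf{s}+t\boldsymbol{\Delta})\,\boldsymbol{\Delta}$. Taylor's theorem with integral remainder, applied to $\phi$ up to first order, gives
\[
\phi(1)=\phi(0)+\phi'(0)+\int_0^1 (1-t)\,\phi''(t)\,dt .
\]
Adding and subtracting $\tfrac12\phi''(0)=\int_0^1(1-t)\phi''(0)\,dt$ identifies the remainder as
\[
R(\boldsymbol{\Delta})=\int_0^1 (1-t)\,\boldsymbol{\Delta}^\top\big(\nabla^2\ell(\mathbf{s}+t\boldsymbol{\Delta})-\nabla^2\ell(\mathbf{s})\big)\boldsymbol{\Delta}\,dt .
\]
This matches the claimed expansion term by term: $\phi(0)=\ell(\mathbf{s})$, $\phi'(0)=\nabla\ell(\mathbf{s})^\top\boldsymbol{\Delta}$, and $\tfrac12\phi''(0)$ is the quadratic term.

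Next, bound the integrand pointwise. For a symmetric matrix $\mathbf{M}$ we have $|\boldsymbol{\Delta}^\top\mathbf{M}\boldsymbol{\Delta}|\le\|\mathbf{M}\|_2\|\boldsymbol{\Delta}\|_2^2$. Apply this with $\mathbf{M}=\nabla^2\ell(\mathbf{s}+t\boldsymbol{\Delta})-\nabla^2\ell(\mathbf{s})$. Both points lie on the segment, so the Lipschitz hypothesis gives $\|\mathbf{M}\|_2\le\rho\,t\|\boldsymbol{\Delta}\|_2$. Hence
\[
|R(\boldsymbol{\Delta})|\le\rho\|\boldsymbol{\Delta}\|_2^3\int_0^1 t(1-t)\,dt=\frac{\rho}{6}\|\boldsymbol{\Delta}\|_2^3 .
\]

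I do not expect a real obstacle. The only delicate point is using the integral form of the remainder rather than the Lagrange form. The Lagrange form would only yield $\tfrac{\rho}{2}\|\boldsymbol{\Delta}\|_2^3$, because it loses the weight $t(1-t)$ whose integral produces the constant $\tfrac16$. The integral form requires only $C^2$ regularity, and $\phi''$ is continuous on $[0,1]$, so the integral is well defined.
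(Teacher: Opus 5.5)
Your proposal is correct and matches the paper's proof essentially step for step: the same restriction $\Phi(t)=\ell(\mathbf{s}+t\boldsymbol{\Delta})$, the integral-form remainder $\int_0^1(1-t)\Phi''(t)\,dt$ with $\tfrac12\Phi''(0)$ added and subtracted, the operator-norm bound on the quadratic form, and the Lipschitz estimate $\rho t\|\boldsymbol{\Delta}\|_2$ integrated against $(1-t)$ to give $\rho/6$. The only difference is presentational: the paper derives the integral remainder by an explicit order-of-integration swap instead of citing Taylor's theorem.
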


\begin{proof}
Define the one-dimensional restriction
\[
\Phi(t) := \ell(\mathbf{s}+t\boldsymbol{\Delta}), \qquad t\in[0,1].
\]
Then, by the chain rule,
\[
\Phi'(t) = \nabla \ell(\mathbf{s}+t\boldsymbol{\Delta})^\top \boldsymbol{\Delta},
\qquad
\Phi''(t) = \boldsymbol{\Delta}^\top \nabla^2 \ell(\mathbf{s}+t\boldsymbol{\Delta})\,\boldsymbol{\Delta}.
\]
Moreover,
\begin{align*}
\Phi(1)-\Phi(0)
&=\int_0^1 \Phi'(t)\,dt \\
&= \int_0^1 \left(\Phi'(0)+\int_0^t \Phi''(u)\,du\right) dt \\
&= \Phi'(0)\int_0^1 dt + \int_0^1\int_0^t \Phi''(u)\,du\,dt \\
&= \Phi'(0) + \int_0^1\int_u^1 \Phi''(u)\,dt\,du \\
&= \Phi'(0) + \int_0^1 (1-u)\,\Phi''(u)\,du \\
&= \Phi'(0) + \int_0^1 (1-t)\,\Phi''(t)\,dt.
\end{align*}
Therefore,
\begin{align*}
\Phi(1)
&= \Phi(0) + \Phi'(0) + \int_0^1 (1-t)\,\Phi''(t)\,dt \\
&= \Phi(0) + \Phi'(0) + \frac{1}{2}\Phi''(0)
   + \int_0^1 (1-t)\big(\Phi''(t)-\Phi''(0)\big)\,dt.
\end{align*}
Further because $\Phi(1)=\ell(\mathbf{s}+\boldsymbol{\Delta})$, $\Phi(0)=\ell(\mathbf{s})$, $
\Phi'(0)=\nabla \ell(\mathbf{s})^\top \boldsymbol{\Delta},
\Phi''(0)=\boldsymbol{\Delta}^\top \nabla^2 \ell(\mathbf{s})\,\boldsymbol{\Delta},
$
we have
\begin{align*}
\ell(\mathbf{s}+\boldsymbol{\Delta})
&= \ell(\mathbf{s})
+ \nabla \ell(\mathbf{s})^\top \boldsymbol{\Delta}
+ \frac{1}{2}\boldsymbol{\Delta}^\top \nabla^2 \ell(\mathbf{s})\,\boldsymbol{\Delta} \\
&\quad + \int_0^1 (1-t)\Big(
\boldsymbol{\Delta}^\top \nabla^2 \ell(\mathbf{s}+t\boldsymbol{\Delta})\,\boldsymbol{\Delta}
-
\boldsymbol{\Delta}^\top \nabla^2 \ell(\mathbf{s})\,\boldsymbol{\Delta}
\Big)\,dt.
\end{align*}
Let
\begin{align*}
R(\boldsymbol{\Delta})
&:= \int_0^1 (1-t)\Big(
\boldsymbol{\Delta}^\top \nabla^2 \ell(\mathbf{s}+t\boldsymbol{\Delta})\,\boldsymbol{\Delta}
-
\boldsymbol{\Delta}^\top \nabla^2 \ell(\mathbf{s})\,\boldsymbol{\Delta}
\Big)\,dt \\
&= \int_0^1 (1-t)\,
\boldsymbol{\Delta}^\top\Big(\nabla^2 \ell(\mathbf{s}+t\boldsymbol{\Delta})-\nabla^2 \ell(\mathbf{s})\Big)\boldsymbol{\Delta}\,dt.
\end{align*}
Then
\begin{align*}
\left|R(\boldsymbol{\Delta})\right|
&= \left|\int_0^1 (1-t)\,
\boldsymbol{\Delta}^\top\Big(\nabla^2 \ell(\mathbf{s}+t\boldsymbol{\Delta})-\nabla^2 \ell(\mathbf{s})\Big)\boldsymbol{\Delta}\,dt\right| \\
&\le \int_0^1 (1-t)\,
\left|\boldsymbol{\Delta}^\top\Big(\nabla^2 \ell(\mathbf{s}+t\boldsymbol{\Delta})-\nabla^2 \ell(\mathbf{s})\Big)\boldsymbol{\Delta}\right|\,dt.
\end{align*}
Based on the Cauchy--Schwarz inequality and the definition of operator norm,
\[
\left|\boldsymbol{\Delta}^\top\Big(\nabla^2 \ell(\mathbf{s}+t\boldsymbol{\Delta})-\nabla^2 \ell(\mathbf{s})\Big)\boldsymbol{\Delta}\right|
\le
\|\boldsymbol{\Delta}\|_2^2\,
\left\|\nabla^2 \ell(\mathbf{s}+t\boldsymbol{\Delta})-\nabla^2 \ell(\mathbf{s})\right\|_2,
\]
and therefore
\[
\left|R(\boldsymbol{\Delta})\right|
\le
\int_0^1 (1-t)\,\|\boldsymbol{\Delta}\|_2^2\,
\left\|\nabla^2 \ell(\mathbf{s}+t\boldsymbol{\Delta})-\nabla^2 \ell(\mathbf{s})\right\|_2\,dt.
\]
Further based on Hessian Lipschitzness,
\[
\left\|\nabla^2 \ell(\mathbf{s}+t\boldsymbol{\Delta})-\nabla^2 \ell(\mathbf{s})\right\|_2
\le
\rho \left\|t\boldsymbol{\Delta}\right\|_2
=
\rho t \|\boldsymbol{\Delta}\|_2,
\]
So,
\begin{align*}
\left|R(\boldsymbol{\Delta})\right|
&\le
\int_0^1 (1-t)\,\|\boldsymbol{\Delta}\|_2^2\,
\big(\rho t \|\boldsymbol{\Delta}\|_2\big)\,dt \\
&=
\rho \|\boldsymbol{\Delta}\|_2^3 \int_0^1 (t-t^2)\,dt \\
&=
\frac{\rho}{6}\|\boldsymbol{\Delta}\|_2^3.
\end{align*}
Therefore,
\[
\ell(\mathbf{s}+\boldsymbol{\Delta})
=
\ell(\mathbf{s})
+ \nabla \ell(\mathbf{s})^\top \boldsymbol{\Delta}
+ \frac{1}{2}\boldsymbol{\Delta}^\top \nabla^2 \ell(\mathbf{s})\,\boldsymbol{\Delta}
+ R(\boldsymbol{\Delta}), \text{ with }
\left|R(\boldsymbol{\Delta})\right|
\le 
\frac{\rho}{6}\|\boldsymbol{\Delta}\|_2^3.
\]
\end{proof}

\subsubsection{Bound for Loss Change Induced by Pruning a Basis \label{appendix:proof-thm-loss-bound-on-basis-pruning}}
We quantify the impact of basis pruning on the change in loss. Theorem~\ref{thm:basis_pruning_bound_on_loss} presents an upper bound on the loss change resulting from the removal of a basis in the setting where the Hessian diagonal is exact. In practice, however, the Hessian diagonal is obtained via estimation. We therefore extend the bound for the loss change to account for relative error in the Hessian diagonal estimate (Theorem~\ref{thm:basis_pruning_relative_hessian_error}).
\LossBoundGeneral*

\begin{proof}
Let $\boldsymbol{\Delta}=-s_i\mathbf{e}_i$. Based on Lemma~\ref{lem:taylor_lipschitz_hessian},
\[
\ell(\mathbf{s}-s_i\mathbf{e}_i)
=
\ell(\mathbf{s})
+
\nabla \ell(\mathbf{s})^\top(-s_i\mathbf{e}_i)
+
\frac{1}{2}(-s_i\mathbf{e}_i)^\top \nabla^2 \ell(\mathbf{s})\,(-s_i\mathbf{e}_i)
+
R(-s_i\mathbf{e}_i),
\]
with
\[
\left|R(-s_i\mathbf{e}_i)\right|
\le
\frac{\rho}{6}\left\|-s_i\mathbf{e}_i\right\|_2^3
=
\frac{\rho}{6}|s_i|^3.
\]
Therefore,
\begin{align*}
|\Delta \ell|
&=
\big|\ell(\mathbf{s}-s_i\mathbf{e}_i)-\ell(\mathbf{s})\big| \\
&=
\left|
\nabla \ell(\mathbf{s})^\top(-s_i\mathbf{e}_i)
+
\frac{1}{2}(-s_i\mathbf{e}_i)^\top \nabla^2 \ell(\mathbf{s})\,(-s_i\mathbf{e}_i)
+
R(-s_i\mathbf{e}_i)
\right|.
\end{align*}
As $g_i=\nabla \ell(\mathbf{s})^\top \mathbf{e}_i$ and $H_{ii}=\mathbf{e}_i^\top \nabla^2 \ell(\mathbf{s})\,\mathbf{e}_i$, we have
\begin{align*}
|\Delta \ell|
&=
\left|-s_i g_i + \frac{1}{2}s_i^2 H_{ii} + R(-s_i\mathbf{e}_i)\right| \\
&\le
\left|-s_i g_i + \frac{1}{2}s_i^2 H_{ii}\right|
+
\left|R(-s_i\mathbf{e}_i)\right| \\
&\le
\left|-s_i g_i + \frac{1}{2}s_i^2 H_{ii}\right|
+
\frac{\rho}{6}|s_i|^3 .
\end{align*}
\end{proof}

\LossBoundRelativeError*

\begin{proof}
\label{appendix:proof-them-bound-on-loss-with-rel-error}
We begin by bounding $H_{ii}$ in terms of $\widehat H_{ii}$. We write
\begin{align*}
\lvert H_{ii} \rvert
&=
\lvert (H_{ii} - \widehat H_{ii}) + \widehat H_{ii} \rvert \\
&\le
\lvert H_{ii} - \widehat H_{ii} \rvert
+
\lvert \widehat H_{ii} \rvert \\
&=
\lvert \widehat H_{ii} - H_{ii} \rvert
+
\lvert \widehat H_{ii} \rvert \\
&\le
\varepsilon \lvert H_{ii} \rvert
+
\lvert \widehat H_{ii} \rvert .
\end{align*}
Rearranging gives
\begin{align*}
(1-\varepsilon)\lvert H_{ii} \rvert
&\le
\lvert \widehat H_{ii} \rvert ,
\end{align*}
Hence,
\begin{align*}
\lvert H_{ii} \rvert
&\le
\frac{1}{1-\varepsilon}
\lvert \widehat H_{ii} \rvert .
\end{align*}
From Theorem~\ref{thm:basis_pruning_bound_on_loss},
\begin{align*}
\lvert \Delta \ell \rvert
&\le
\left|
- g_i s_i
+ \frac{1}{2} H_{ii} s_i^2
\right|
+
\frac{\rho}{6}\lvert s_i \rvert^3 .
\end{align*}
We insert and subtract $\tfrac{1}{2}\widehat H_{ii} s_i^2$
\begin{align*}
\lvert \Delta \ell \rvert
&\le
\left|
- g_i s_i
+ \frac{1}{2}\widehat H_{ii} s_i^2
+ \frac{1}{2} H_{ii} s_i^2
- \frac{1}{2}\widehat H_{ii} s_i^2
\right|
+
\frac{\rho}{6}\lvert s_i \rvert^3 .
\end{align*}
Applying the triangle inequality yields
\begin{align*}
\lvert \Delta \ell \rvert
&\le
\left|
- g_i s_i
+ \frac{1}{2}\widehat H_{ii} s_i^2
\right|
+
\frac{1}{2}s_i^2
\lvert H_{ii} - \widehat H_{ii} \rvert
+
\frac{\rho}{6}\lvert s_i \rvert^3 .
\end{align*}
As $\widehat H_{ii}$ has a maximum relative error of $\varepsilon$,
\begin{align*}
\lvert \Delta \ell \rvert
&\le
\left|
- g_i s_i
+ \frac{1}{2}\widehat H_{ii} s_i^2
\right|
+
\frac{1}{2}\varepsilon s_i^2
\lvert H_{ii} \rvert
+
\frac{\rho}{6}\lvert s_i \rvert^3 .
\end{align*}
 As $\lvert H_{ii} \rvert \le \frac{1}{1-\varepsilon}\, \lvert \widehat{H}_{ii} \rvert,$

\begin{align*}
\lvert \Delta \ell \rvert
&\le
\left|
- g_i s_i
+ \frac{1}{2}\widehat H_{ii} s_i^2
\right|
+
\frac{\varepsilon}{2(1-\varepsilon)}
\lvert \widehat H_{ii} \rvert s_i^2
+
\frac{\rho}{6}\lvert s_i \rvert^3 .
\end{align*}
\end{proof}

\subsection{Variance of the Hessian Diagonal Estimator \label{apdx:variance-dimensionality-convergence-proof}}
By leveraging the power-law decay of neural network Hessian spectrum (Assumption~\ref{assump:spectral-power-law}), we prove that the total variance of the Hessian diagonal estimator is bounded independently of the parameter dimension (Theorem~\ref{thm:hutchinson-variance-dimensionality}). 


 \VarDimensionalityHessian*

\begin{proof}
We begin by expanding the Hutchinson diagonal estimator. The $i$-th component is
\begin{align*}
\mathrm{D}^{(s)}(\mathbf{H})_i
&= \frac{1}{s}\sum_{k=1}^s z^{(k)}_i \, (\mathbf{H}\mathbf{z}^{(k)})_i \\
&= \frac{1}{s}\sum_{k=1}^s \sum_{j=1}^n H_{ij}\, z^{(k)}_i z^{(k)}_j \\
&= \frac{1}{s}\sum_{k=1}^s \left( H_{ii} (z^{(k)}_i)^2 + \sum_{j \neq i} H_{ij} z^{(k)}_i z^{(k)}_j \right) \\
&= H_{ii} + \frac{1}{s}\sum_{k=1}^s \sum_{j \neq i} H_{ij} z^{(k)}_i z^{(k)}_j .
\end{align*}
Since $\mathbb{E}[z^{(k)}_i z^{(k)}_j]=0$ for $j\neq i$, we have
\[
\mathbb{E}\!\left[\mathrm{D}^{(s)}(\mathbf{H})_i\right]
=
H_{ii}
+
\frac{1}{s}\sum_{k=1}^s \sum_{j \neq i} H_{ij}\, \mathbb{E}[z^{(k)}_i z^{(k)}_j]
=
H_{ii},
\]
Using $\mathrm{Var}(X)=\mathbb{E}[X^2]-(\mathbb{E}[X])^2$ with
$X=\mathrm{D}^{(s)}(\mathbf{H})_i$, we write
\begin{align*}
\mathrm{Var}\!\big(\mathrm{D}^{(s)}(\mathbf{H})_i\big)
&=
\mathbb{E}\!\left[
\left(
H_{ii}
+
\frac{1}{s}\sum_{k=1}^s \sum_{j \neq i} H_{ij}\, z^{(k)}_i z^{(k)}_j
\right)^2
\right]
-
H_{ii}^2 \\[6pt]
&=
\mathbb{E}\!\left[
H_{ii}^2
+
\frac{2H_{ii}}{s}\sum_{k=1}^s \sum_{j \neq i} H_{ij}\, z^{(k)}_i z^{(k)}_j
+
\left(
\frac{1}{s}\sum_{k=1}^s \sum_{j \neq i} H_{ij}\, z^{(k)}_i z^{(k)}_j
\right)^2
\right]
-
H_{ii}^2 . \\
&=
H_{ii}^2
+
\frac{2H_{ii}}{s}\sum_{k=1}^s \sum_{j \neq i} H_{ij}\,
\mathbb{E}[z^{(k)}_i z^{(k)}_j]
+
\mathbb{E}\!\left[
\left(
\frac{1}{s}\sum_{k=1}^s \sum_{j \neq i} H_{ij}\, z^{(k)}_i z^{(k)}_j
\right)^2
\right]
-
H_{ii}^2 .
\end{align*}
Since $\mathbb{E}[z^{(k)}_i z^{(k)}_j]=0$ for $j\neq i$, we have
\begin{align*}
\mathrm{Var}\!\big(\mathrm{D}^{(s)}(\mathbf{H})_i\big)
=&
\mathbb{E}\!\left[
\left(
\frac{1}{s}\sum_{k=1}^s \sum_{j \neq i} H_{ij}\, z^{(k)}_i z^{(k)}_j
\right)^2
\right] \\[6pt]
=&
\frac{1}{s^2}\,
\mathbb{E}\!\left[
\sum_{k=1}^s
\left(\sum_{j \neq i} H_{ij}\, z^{(k)}_i z^{(k)}_j\right)^2
\right]
 \\[6pt]
&+\frac{1}{s^2}\,
\mathbb{E}\!\left[
\sum_{m\neq n}
\left(\sum_{j \neq i} H_{ij}\, z^{(m)}_i z^{(m)}_j\right)
\left(\sum_{j \neq i} H_{ij}\, z^{(n)}_i z^{(n)}_j\right)
\right] \\[6pt]
=&
\frac{1}{s^2}\,
\mathbb{E}\!\left[
\sum_{k=1}^s
\left(\sum_{j \neq i} H_{ij}\, z^{(k)}_i z^{(k)}_j\right)^2
\right] \\[6pt]
&+
\frac{1}{s^2}\,
\sum_{m\neq n}
\mathbb{E}\!\left[\sum_{j \neq i} H_{ij}\, z^{(m)}_i z^{(m)}_j\right]\,
\mathbb{E}\!\left[\sum_{j \neq i} H_{ij}\, z^{(n)}_i z^{(n)}_j\right].
\end{align*}
For Rademacher probes, \(
\mathbb{E}\!\big[z^{(k)}_i z^{(k)}_j\big]
=0
\) for $j\neq i$.
Therefore,
\[
\mathbb{E}\!\left[\sum_{j \neq i} H_{ij}\, z^{(k)}_i z^{(k)}_j\right]
=
\sum_{j\neq i} H_{ij}\,\mathbb{E}\!\big[z^{(k)}_i z^{(k)}_j\big]
=0.
\]
As a consequence, the entire $m\neq n$ contribution is zero, and we are left with
\begin{align*}
\mathrm{Var}\!\big(\mathrm{D}^{(s)}(\mathbf{H})_i\big)
&=
\frac{1}{s^2}\,
\mathbb{E}\!\left[
\sum_{k=1}^s
\left(\sum_{j \neq i} H_{ij}\, z^{(k)}_i z^{(k)}_j\right)^2
\right] \\[6pt]
&=
\frac{1}{s^2}\,
\sum_{k=1}^s
\mathbb{E}\!\left[
\left(\sum_{j \neq i} H_{ij}\, z^{(k)}_i z^{(k)}_j\right)^2
\right]\\
&=
\frac{1}{s^2}\,
\sum_{k=1}^s
\Bigg(
\mathbb{E}\!\left[
\sum_{j \neq i} H_{ij}^2\,\big(z^{(k)}_i\big)^2 \big(z^{(k)}_j\big)^2
\right] \\[4pt]
&\qquad\qquad\qquad
+
\mathbb{E}\!\left[
\sum_{\substack{j_1\neq i,\; j_2\neq i\\ j_1\neq j_2}}
H_{ij_1}H_{ij_2}\,\big(z^{(k)}_i\big)^2\, z^{(k)}_{j_1} z^{(k)}_{j_2}
\right] \\[4pt]
&\qquad\qquad\qquad
+
\mathbb{E}\!\left[
\sum_{\substack{i_1\neq j,\; i_2\neq j\\ i_1\neq i_2}}
H_{i_1 j}H_{i_2 j}\, z^{(k)}_{i_1} z^{(k)}_{i_2}\,\big(z^{(k)}_j\big)^2
\right] \\[4pt]
&\qquad\qquad\qquad
+
\mathbb{E}\!\left[
\sum_{\substack{i_1\neq i_2,\; j_1\neq j_2\\ i_1\neq j_1,\; i_2\neq j_2}}
H_{i_1 j_1}H_{i_2 j_2}\, z^{(k)}_{i_1} z^{(k)}_{j_1}\, z^{(k)}_{i_2} z^{(k)}_{j_2}
\right]
\Bigg).
\end{align*}
For Rademacher probes, $(z_i^{(k)})^2=1$, Hence
\[
\mathbb{E}\!\left[\big(z^{(k)}_i\big)^2 \big(z^{(k)}_j\big)^2\right]= \mathbb{E}\!\left[(z^{(k)}_i)^2\right]\,
\mathbb{E}\!\left[(z^{(k)}_j)^2\right] = 1,
\qquad
\mathbb{E}\!\left[\big(z^{(k)}_i\big)^2 z^{(k)}_{j_1} z^{(k)}_{j_2}\right]=0 \ (j_1\neq j_2),
\]
\[
\mathbb{E}\!\left[z^{(k)}_{i_1} z^{(k)}_{i_2}\big(z^{(k)}_j\big)^2\right]=0 \ (i_1\neq i_2),
\qquad
\mathbb{E}\!\left[z^{(k)}_{i_1} z^{(k)}_{j_1} z^{(k)}_{i_2} z^{(k)}_{j_2}\right]=0
\ \text{(all indices distinct).}
\]
\begin{align*}
\intertext{So only the first (square) term remains, and we get}
\mathrm{Var}\!\big(\mathrm{D}^{(s)}(\mathbf{H})_i\big)
&=
\frac{1}{s^2}\,
\sum_{k=1}^s
\sum_{j \neq i} H_{ij}^2
=
\frac{1}{s}\sum_{j \neq i} H_{ij}^2.
\end{align*}
Therefore,
\begin{align*}
\sum_{i=1}^n \mathrm{Var}\!\big(\mathrm{D}^{(s)}(\mathbf{H})_i\big)
= \frac{1}{s}\sum_{i=1}^n \sum_{j \neq i} H_{ij}^2 
\le \frac{1}{s}\sum_{i=1}^n \sum_{j=1}^n H_{ij}^2 
= \frac{1}{s}\|\mathbf{H}\|_F^2 .
\end{align*}
Moreover, since $\mathbf{H}$ is symmetric, we have
\begin{align*}
\|\mathbf{H}\|_F^2
&= \mathrm{Tr}(\mathbf{H}^\top \mathbf{H}) = \mathrm{Tr}(\mathbf{H}^2),
\end{align*}
Then
\[
\sum_{i=1}^n \mathrm{Var}\!\big(\mathrm{D}^{(s)}(\mathbf{H})_i\big) \leq \frac{1}{s}\mathrm{Tr}(\mathbf{H}^2)
\]
From spectral theory (Lemma~\ref{lem:entrywise-spectral}), $\mathbf{H}$ admits an
orthonormal eigen-decomposition:
\[
\mathbf{H} = \sum_{k=1}^n \lambda_k \mathbf{u}_k \mathbf{u}_k^\top,
\]
Then,
\begin{align*}
\mathbf{H}^2
&=
\left(\sum_{k=1}^n \lambda_k \mathbf{u}_k \mathbf{u}_k^\top\right)
\left(\sum_{\ell=1}^n \lambda_\ell \mathbf{u}_\ell \mathbf{u}_\ell^\top\right) 
=
\sum_{k,\ell=1}^n
\lambda_k \lambda_\ell\,
\mathbf{u}_k
(\mathbf{u}_k^\top \mathbf{u}_\ell)
\mathbf{u}_\ell^\top .
\end{align*}
Since $\{\mathbf{u}_k\}_{k=1}^n$ is a set of orthonormal bases,
$\mathbf{u}_k^\top \mathbf{u}_\ell = 0$ for $k \neq \ell$ and
$\mathbf{u}_k^\top \mathbf{u}_k = 1$, which yields
\begin{align*}
\mathbf{H}^2
&= \sum_{k=1}^n \lambda_k^2 \mathbf{u}_k \mathbf{u}_k^\top .
\end{align*}
We obtain
\begin{align*}
\mathrm{Tr}(\mathbf{H}^2)
=
\mathrm{Tr}\!\left(\sum_{k=1}^n \lambda_k^2 \mathbf{u}_k \mathbf{u}_k^\top\right) 
=
\sum_{k=1}^n \lambda_k^2
\mathrm{Tr}(\mathbf{u}_k \mathbf{u}_k^\top) 
=
\sum_{k=1}^n \lambda_k^2
\mathrm{Tr}(\mathbf{u}_k^\top \mathbf{u}_k) 
=
\sum_{k=1}^n \lambda_k^2 .
\end{align*}
Combining the above yields
\begin{equation}
\label{eqn-variance-in-egienvectors}
\sum_{i=1}^n \mathrm{Var}\!\big(\mathrm{D}^{(s)}(\mathbf{H})_i\big)
\le \frac{1}{s}\mathrm{Tr}(\mathbf{H}^2)
= \frac{1}{s}\sum_{k=1}^n \lambda_k^2 .
\end{equation}
Under the spectral power-law assumption (Assumption~\ref{assump:spectral-power-law}),
$|\lambda_k| \le |\lambda_1| k^{-\alpha}$, and hence $\lambda_k^2 \le |\lambda_1|^2 k^{-2\alpha}$.
Substituting into \eqref{eqn-variance-in-egienvectors} gives
\begin{align*}
\sum_{i=1}^n \mathrm{Var}\!\big(\mathrm{D}^{(s)}(\mathbf{H})_i\big)
\le \frac{1}{s}|\lambda_1|^2 \sum_{k=1}^n \frac{1}{k^{2\alpha}}
= \frac{1}{s}|\lambda_1|^2 H_{n,2\alpha}
\le \frac{1}{s}|\lambda_1|^2 \zeta(2\alpha).
\end{align*}
As $\alpha > \tfrac{1}{2}$, the series converges as $n \to \infty$. This bound for the total variance is independent of parameter dimensionality $n$.


\end{proof}

\subsection{Probabilistic Guarantee for Hessian Diagonal Estimation \label{appendix:prob-bound-section}}

We strengthen the variance analysis of the Hutchinson diagonal estimator by deriving a probabilistic bound for the estimation error. Our result, which is presented in Theorem~\ref{theorem:prob-hessian-diagonal}, is connected to the spectral structure of the Hessian. The proof of Theorem~\ref{theorem:prob-hessian-diagonal} uses the following Lemma:

\begin{lemma}
\label{theorem:eps_delta-approx}
For any $\varepsilon > 0$, $0 < \delta < 1$, and matrix $\mathbf{A} \in \mathbb{R}^{n \times n}$ with $\operatorname{diag}(\mathbf{A}) \neq \mathbf{0}$, the Rademacher diagonal estimator $\mathrm{D}^{(s)}(\mathbf{A})$ is an $(\varepsilon,\delta)$-approximator if the number of samples $s$ satisfies
\[
s
>
2\left(
\frac{\|\mathbf{A}\|_F^2 - \|\operatorname{diag}(\mathbf{A})\|_2^2}
{\|\operatorname{diag}(\mathbf{A})\|_2^2}
\right)
\left(
\frac{\ln(2n/\delta)}{\varepsilon^2}
\right).
\]
\end{lemma}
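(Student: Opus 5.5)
The plan is to control each coordinate of the error vector separately with a Hoeffding bound, choose per-coordinate thresholds whose squares sum to $\varepsilon^2\|\operatorname{diag}(\mathbf{A})\|_2^2$, and finish with a union bound over the $n$ coordinates. Write $\mathbf{d}=\operatorname{diag}(\mathbf{A})$ and $\mathbf{E}=\mathrm{D}^{(s)}(\mathbf{A})-\mathbf{d}$. Define $r_i^2=\sum_{j\neq i}A_{ij}^2$ and $R=\sum_i r_i^2=\|\mathbf{A}\|_F^2-\|\mathbf{d}\|_2^2$.

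\emph{Step 1: each error coordinate is a Rademacher sum.} Exactly as in the proof of Theorem~\ref{thm:hutchinson-variance-dimensionality}, using $(z_i^{(k)})^2=1$,
\[
E_i=\frac{1}{s}\sum_{k=1}^s\sum_{j\neq i}A_{ij}\,z_i^{(k)}z_j^{(k)}.
\]
Fix $i$ and set $w_j^{(k)}:=z_i^{(k)}z_j^{(k)}$ for $j\neq i$. Since $z_i^{(k)}$ is independent of $\{z_j^{(k)}\}_{j\neq i}$, the family $\{w_j^{(k)}\}_{j\neq i,\,k\le s}$ is again i.i.d.\ Rademacher. Hence $sE_i$ is a Rademacher sum whose coefficient vector has squared norm $s\,r_i^2$.

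\emph{Step 2: Hoeffding for one coordinate.} Hoeffding's inequality for Rademacher sums gives
\[
\Pr(|E_i|\ge t)\le 2\exp\!\left(-\frac{s t^2}{2r_i^2}\right).
\]
This step is the crux of the proof. A naive Hoeffding bound on each summand $z_i^{(k)}(\mathbf{A}\mathbf{z}^{(k)})_i$ would involve the $\ell_1$ norm of the off-diagonal row. The resymmetrization in Step 1 is what yields the $\ell_2$ quantity $r_i^2$, and that is needed to recover the Frobenius form in the statement.

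\emph{Step 3: per-coordinate thresholds.} Choose $t_i=\varepsilon\|\mathbf{d}\|_2\, r_i/\sqrt{R}$. If $R=0$, then $\mathbf{E}=\mathbf{0}$ deterministically and there is nothing to prove; if $r_i=0$, then $E_i=0$. With this choice $\sum_i t_i^2=\varepsilon^2\|\mathbf{d}\|_2^2$, so the event $\{|E_i|\le t_i\ \forall i\}$ implies $\|\mathbf{E}\|_2\le\varepsilon\|\mathbf{d}\|_2$. Substituting $t_i$ into Step 2, every coordinate with $r_i>0$ has failure probability at most
\[
2\exp\!\left(-\frac{s\varepsilon^2\|\mathbf{d}\|_2^2}{2R}\right),
\]
which is uniform in $i$.

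\emph{Step 4: union bound.} A union bound over the $n$ coordinates gives total failure probability at most $2n\exp\!\left(-s\varepsilon^2\|\mathbf{d}\|_2^2/(2R)\right)$. This is at most $\delta$ (strictly less, in fact) whenever
\[
s>\frac{2R}{\|\mathbf{d}\|_2^2}\cdot\frac{\ln(2n/\delta)}{\varepsilon^2},
\]
which is exactly the stated condition. No symmetry of $\mathbf{A}$ is used, and $\mathbf{d}\neq\mathbf{0}$ is needed only so that the ratio is defined.
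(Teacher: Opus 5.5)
Your proof is correct, and it reproduces the stated constant exactly. The key steps all check out: rewriting $E_i$ as a Rademacher sum in the variables $w_j^{(k)}=z_i^{(k)}z_j^{(k)}$ with coefficient norm $s\,r_i^2$, then two-sided Hoeffding, then thresholds $t_i=\varepsilon\|\mathbf{d}\|_2 r_i/\sqrt{R}$. These thresholds make $\sum_i t_i^2=\varepsilon^2\|\mathbf{d}\|_2^2$ and the per-coordinate failure probability uniform, and a union bound over $n$ coordinates gives precisely $s>2R\ln(2n/\delta)/(\varepsilon^2\|\mathbf{d}\|_2^2)$.

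The paper gives no argument of its own for this lemma; it only restates it from Baston and Nakatsukasa (2022). Your proposal is therefore a self-contained replacement for a citation, not an alternative to an in-paper proof. The shape of the bound fits your construction: the Hoeffding constant $2$, the $\ln(2n/\delta)$ from a union bound over rows, and the aggregate ratio $(\|\mathbf{A}\|_F^2-\|\operatorname{diag}(\mathbf{A})\|_2^2)/\|\operatorname{diag}(\mathbf{A})\|_2^2$ rather than a worst-row ratio all come from thresholds weighted by the off-diagonal row norms $r_i$. This is the standard Hoeffding-plus-union-bound argument behind the cited result.

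Writing it out buys several things the citation leaves implicit:
\begin{itemize}
\item No symmetry of $\mathbf{A}$ is needed, which matches the lemma's hypothesis of an arbitrary $\mathbf{A}\in\mathbb{R}^{n\times n}$.
\item The condition $\operatorname{diag}(\mathbf{A})\neq\mathbf{0}$ serves only to define the ratio.
\item The degenerate cases $R=0$ and $r_i=0$ are harmless.
\item The $n$ inside the logarithm can be reduced to the number of rows with nonzero off-diagonal mass.
\end{itemize}
Your remark that a naive Hoeffding bound on $z_i^{(k)}(\mathbf{A}\mathbf{z}^{(k)})_i$ would only control the $\ell_1$ norm of the off-diagonal row identifies exactly where care is needed. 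Your argument that $\{z_i^{(k)}z_j^{(k)}\}_{j\neq i}$ are again i.i.d.\ Rademacher handles it correctly.
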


\begin{proof}
~\citet{baston2022stochastic} proposed the lemma, which we restate here.
\end{proof}

\ProbHessianDiagonal*

\begin{proof}
We begin with the decomposition
\begin{align*}
\mathbf{H}
&= \sum_{i=1}^n \lambda_i \mathbf{u}_i \mathbf{u}_i^\top .
\end{align*}
By definition of the Frobenius norm,
\begin{align*}
\|\mathbf{H}\|_F^2
&= \operatorname{Tr}(\mathbf{H}\mathbf{H}^\top).
\end{align*}
Expanding $\mathbf{H}\mathbf{H}^\top$, we obtain
\begin{align*}
\mathbf{H}\mathbf{H}^\top
&=
\left(\sum_{i=1}^n \lambda_i \mathbf{u}_i \mathbf{u}_i^\top\right)
\left(\sum_{j=1}^n \lambda_j \mathbf{u}_j \mathbf{u}_j^\top\right)^\top 
=
\sum_{i,j=1}^n
\lambda_i \lambda_j\,
\mathbf{u}_i
(\mathbf{u}_i^\top \mathbf{u}_j)
\mathbf{u}_j^\top .
\end{align*}
Since $\{\mathbf{u}_i\}_{i=1}^n$ is a set of orthonormal bases,
$\mathbf{u}_i^\top \mathbf{u}_j = 0$ for $i \neq j$ and
$\mathbf{u}_i^\top \mathbf{u}_i = 1$, which yields
\begin{align*}
\mathbf{H}\mathbf{H}^\top
&= \sum_{i=1}^n \lambda_i^2 \mathbf{u}_i \mathbf{u}_i^\top .
\end{align*}
Therefore,
\begin{align*}
\|\mathbf{H}\|_F^2
= \operatorname{Tr}(\mathbf{H}\mathbf{H}^\top) 
= \operatorname{Tr}\!\left(\sum_{i=1}^n \lambda_i^2 \mathbf{u}_i \mathbf{u}_i^\top \right) 
= \sum_{i=1}^n \lambda_i^2 \operatorname{Tr}(\mathbf{u}_i \mathbf{u}_i^\top) 
= \sum_{i=1}^n \lambda_i^2
\operatorname{Tr}(\mathbf{u}_i^\top \mathbf{u}_i) 
= \sum_{i=1}^n \lambda_i^2 .
\end{align*}
Under the power-law decay condition,
\begin{align*}
|\lambda_i|
&\le \frac{|\lambda_1|}{i^\alpha},
\end{align*}
Hence,
\begin{align*}
\lambda_i^2
&\le \frac{|\lambda_1|^2}{i^{2\alpha}} .
\end{align*}
Consequently,
\begin{align*}
\sum_{i=1}^n \lambda_i^2
\le |\lambda_1|^2 \sum_{i=1}^n \frac{1}{i^{2\alpha}} 
= |\lambda_1|^2 H_{n,2\alpha}.
\end{align*}
Therefore,
\begin{equation}
\label{eqn:frob}
    \|\mathbf{H}\|_F^2 \;\le\; |\lambda_1|^2 \, H_{n,2\alpha}
\end{equation} 
Next, we bound the diagonal term. By Cauchy--Schwarz,
\begin{align*}
\|\operatorname{diag}(\mathbf{H})\|_2^2
= \sum_{i=1}^n H_{ii}^2 
\ge \frac{1}{n}
\left(\sum_{i=1}^n H_{ii}\right)^2 
= \frac{1}{n}\operatorname{Tr}(\mathbf{H})^2 .
\end{align*}
Hence, 
\begin{equation}
\label{eqn:diag-bound}
    \|\operatorname{diag}(\mathbf{H})\|_2^2 \;\ge\; \frac{1}{n}\operatorname{Tr}(\mathbf{H})^2 .
\end{equation}
Combining  Eq.~(\ref{eqn:frob}) and Eq.~(\ref{eqn:diag-bound}),
\begin{align*}
\frac{\|\mathbf{H}\|_F^2 - \|\operatorname{diag}(\mathbf{H})\|_2^2}
{\|\operatorname{diag}(\mathbf{H})\|_2^2}
&=
\frac{\|\mathbf{H}\|_F^2}{\|\operatorname{diag}(\mathbf{H})\|_2^2} - 1 \\
&\leq
\frac{|\lambda_1|^2 H_{n,2\alpha}}
{\tfrac{1}{n}\operatorname{Tr}(\mathbf{H})^2} - 1 .
\end{align*}
From Lemma~\ref{theorem:eps_delta-approx},
\begin{align*}
\Pr\!\left(
\left\| \mathrm{D}^{(s)}(\mathbf{H}) - \operatorname{diag}(\mathbf{H}) \right\|_2
\le
\varepsilon \left\| \operatorname{diag}(\mathbf{H}) \right\|_2
\right)
&\ge 1 - \delta
\end{align*}
provided that
\begin{align*}
s
&>
2\left(
\frac{\|\mathbf{H}\|_F^2 - \|\operatorname{diag}(\mathbf{H})\|_2^2}
{\|\operatorname{diag}(\mathbf{H})\|_2^2}
\right)
\left(
\frac{\ln(2n/\delta)}{\varepsilon^2}
\right).
\end{align*}
Therefore, if 
\begin{align*}
s
&>
2\left(
\frac{|\lambda_1|^2 H_{n,2\alpha}}
{\tfrac{1}{n}\operatorname{Tr}(\mathbf{H})^2}
- 1
\right)
\left(
\frac{\ln(2n/\delta)}{\varepsilon^2}
\right)
\;\ge\;
2\left(
\frac{\|\mathbf{H}\|_F^2 - \|\operatorname{diag}(\mathbf{H})\|_2^2}
{\|\operatorname{diag}(\mathbf{H})\|_2^2}
\right)\left(
\frac{\ln(2n/\delta)}{\varepsilon^2}
\right),
\end{align*}
we can guarantee that
\begin{align*}
\Pr\!\left(
\left\| \mathrm{D}^{(s)} - \operatorname{diag}(\mathbf{H}) \right\|_2
\le
\varepsilon \left\| \operatorname{diag}(\mathbf{H}) \right\|_2
\right)
&\ge 1 - \delta.
\end{align*}
\end{proof}

\subsubsection{Probabilistic Bound for Positive semidefinite Hessians \label{appendix:prob-bound-subsection}}

When the Hessian is positive semidefinite, the general probabilistic bound
derived in Theorem~\ref{theorem:prob-hessian-diagonal} can be
simplified.
In this setting, the trace of the Hessian admits a direct lower bound in terms
of the leading eigenvalue, yielding a simplified condition
on the number of samples required.
Corollary~\ref{corollary:prob-bound-PSD} provides an explicit $(\varepsilon,\delta)$ probabilistic bound on the sample
complexity for Hessian-diagonal estimation in the positive semidefinite case.

\ProbBoundPSDHessian*

\begin{proof}
As $\mathbf{H}$ is positive semidefinite, $\lambda_i \ge 0$, $i=1,\ldots,n$.
Further because $\lambda_1 > 0$, 
\[
\operatorname{Tr}(\mathbf{H})=\sum_{i=1}^n \lambda_i \ge \lambda_1 > 0.
\]
Based on Theorem~\ref{theorem:prob-hessian-diagonal}, the Rademacher diagonal estimator $\mathrm{D}^{(s)}$ is a relative $(\varepsilon,\delta)$-approximator if
\[
s
>
2\left(
\frac{|\lambda_1^2| H_{n,2\alpha}}{\tfrac{1}{n}\operatorname{Tr}(\mathbf{H})^2}
- 1
\right)
\left(
\frac{\ln(2n/\delta)}{\varepsilon^2}
\right).
\]
As $\operatorname{Tr}(\mathbf{H}) \ge \lambda_1$,
\begin{align*}
2\left(
\frac{|\lambda_1^2| H_{n,2\alpha}}{\tfrac{1}{n}\operatorname{Tr}(\mathbf{H})^2}
- 1
\right)
\left(
\frac{\ln(2n/\delta)}{\varepsilon^2}
\right)
&\le
2\left(
\frac{|\lambda_1^2| H_{n,2\alpha}}{\tfrac{1}{n}|\lambda_1^2|}
- 1
\right)
\left(
\frac{\ln(2n/\delta)}{\varepsilon^2}
\right) \\
&=
2\left(n\,H_{n,2\alpha}-1\right)
\left(
\frac{\ln(2n/\delta)}{\varepsilon^2}
\right) \\
&<
2\left(n\,\zeta(2\alpha)-1\right)
\left(
\frac{\ln(2n/\delta)}{\varepsilon^2}
\right).
\end{align*}
Therefore, $\mathrm{D}^{(s)}$ is a relative $(\varepsilon,\delta)$-approximator if
\(
s
>
2\left(n\,\zeta(2\alpha)-1\right)
\left(
\frac{\ln(2n/\delta)}{\varepsilon^2}
\right).
\)
\end{proof}

\subsection{Non-negativity of the Hessian diagonal of the output layer (\texttt{lm\_head})\label{appendix:proof-prop-lm-head}}

The \texttt{lm\_head} is the only layer whose outputs enter the loss directly. Any change in earlier layers affects the loss through the output produced by the \texttt{lm\_head}. The \texttt{lm\_head} produces logits, which are passed through Softmax to compute the loss. We prove that the \texttt{lm\_head} (output layer) has a non-negative Hessian diagonal (Theorem~\ref{prop:lm_head}). While this non-negativity property is guaranteed for the \texttt{lm\_head}, it is not strictly guaranteed for intermediate layers where non-linear activations (e.g., GeLU) and complex inter-layer dependencies can result in negative Hessian diagonal elements.

\begin{restatable}[Non-negativity of the Hessian diagonal of the output layer]
{theorem}{PropLmHead}
\label{prop:lm_head}
Consider a model whose \emph{output layer} produces the pre-softmax logits
\(
\mathbf{z}(\boldsymbol{\sigma})
=\sum_{i=1}^r \sigma_i\, \mathbf{u}_i \mathbf{v}_i^\top \mathbf{x},
\)
where $\mathbf{x}$ is the layer input, $\{\mathbf{u}_i,\mathbf{v}_i\}_{i=1}^r$ are fixed vectors defining rank-one bases
$\mathbf{u}_i\mathbf{v}_i^\top$, and $\boldsymbol{\sigma}=(\sigma_1,\ldots,\sigma_r)$ are the corresponding output-layer
singular values (scaling coefficients).
Let
\(
\mathbf{p}=\mathrm{softmax}(\mathbf{z}),\,
\ell(\boldsymbol{\sigma})=-\hat{\mathbf{p}}^\top \log \mathbf{p},
\)
where $\hat{\mathbf{p}}$ is a fixed target distribution satisfying $\hat{\mathbf{p}}\succeq \mathbf{0}$ and
$\|\hat{\mathbf{p}}\|_1=1$.
Then, for every $i\in\{1,\dots,r\}$,
\(
\frac{\partial^2 \ell}{\partial \sigma_i^2}\;\ge\;0.
\)
\end{restatable}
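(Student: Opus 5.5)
The plan is to exploit the fact that the logits are \emph{linear} in $\boldsymbol{\sigma}$, so that all curvature comes from the softmax cross-entropy. That curvature is positive semidefinite.

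First, write $\mathbf{z}(\boldsymbol{\sigma}) = \mathbf{z}_0 + \sigma_i \mathbf{a}_i$ with $\mathbf{a}_i := \mathbf{u}_i(\mathbf{v}_i^\top\mathbf{x})$. Here $\mathbf{z}_0$ collects the contributions of all other bases and does not depend on $\sigma_i$. Hence $\partial \mathbf{z}/\partial\sigma_i = \mathbf{a}_i$ and $\partial^2\mathbf{z}/\partial\sigma_i^2 = \mathbf{0}$. By the chain rule,
\[
\frac{\partial^2 \ell}{\partial \sigma_i^2} = \mathbf{a}_i^\top \nabla_{\mathbf{z}}^2 \ell(\mathbf{z})\, \mathbf{a}_i ,
\]
and the second-order chain-rule term $\nabla_{\mathbf{z}}\ell^\top \partial^2\mathbf{z}/\partial\sigma_i^2$ vanishes by linearity.

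Second, compute the Hessian of $\ell$ with respect to the logits. Using $\|\hat{\mathbf{p}}\|_1=1$, rewrite
\[
\ell = -\hat{\mathbf{p}}^\top \mathbf{z} + \log\sum_k e^{z_k}.
\]
The gradient is $\mathbf{p}-\hat{\mathbf{p}}$, and the Hessian is $\mathrm{Diag}(\mathbf{p}) - \mathbf{p}\mathbf{p}^\top$, which does not depend on $\hat{\mathbf{p}}$. The normalization $\|\hat{\mathbf{p}}\|_1=1$ is exactly what allows the log-partition term to be pulled out with coefficient one. The nonnegativity $\hat{\mathbf{p}}\succeq\mathbf{0}$ is not even needed for this step.

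Third, show that $\mathrm{Diag}(\mathbf{p})-\mathbf{p}\mathbf{p}^\top$ is positive semidefinite. For any $\mathbf{a}$,
\[
\mathbf{a}^\top(\mathrm{Diag}(\mathbf{p})-\mathbf{p}\mathbf{p}^\top)\mathbf{a} = \sum_k p_k a_k^2 - \Big(\sum_k p_k a_k\Big)^2 .
\]
This is the variance of $a_K$ with $K\sim\mathbf{p}$, so it is nonnegative. Equivalently, apply Cauchy--Schwarz (or Jensen) with weights $p_k\ge 0$ summing to one. Plugging in $\mathbf{a}=\mathbf{a}_i$ gives $\partial^2\ell/\partial\sigma_i^2 = (\mathbf{v}_i^\top\mathbf{x})^2\,\mathrm{Var}_{K\sim\mathbf{p}}(u_{iK}) \ge 0$.

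There is no real obstacle here. The only step that needs care is the second, deriving the softmax Jacobian correctly, which follows from $\partial p_k/\partial z_j = p_k(\delta_{kj}-p_j)$. It is also worth remarking that the argument extends to summing over tokens or positions in a sequence, since a sum of nonnegative terms remains nonnegative.
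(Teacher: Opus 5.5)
Your proof is correct and is essentially the paper's argument. Linearity of the logits in $\sigma_i$ reduces $\partial^2\ell/\partial\sigma_i^2$ to $(\mathbf{v}_i^\top\mathbf{x})^2\,\mathbf{u}_i^\top(\mathrm{Diag}(\mathbf{p})-\mathbf{p}\mathbf{p}^\top)\mathbf{u}_i$, and nonnegativity follows from the same variance identity $\sum_k p_k(u_{ik}-\mathbf{p}^\top\mathbf{u}_i)^2\ge 0$; the paper reaches the quadratic form through the softmax Jacobian rather than your log-sum-exp rewrite. One correction to your aside: pulling out the log-partition term with coefficient one needs $\mathbf{1}^\top\hat{\mathbf{p}}=1$, which follows from $\|\hat{\mathbf{p}}\|_1=1$ only because $\hat{\mathbf{p}}\succeq\mathbf{0}$, so nonnegativity is genuinely used. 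For example, $\hat{\mathbf{p}}=-\mathbf{e}_1$ satisfies $\|\hat{\mathbf{p}}\|_1=1$ but gives logit Hessian $-(\mathrm{Diag}(\mathbf{p})-\mathbf{p}\mathbf{p}^\top)$, so $\partial^2\ell/\partial\sigma_i^2<0$ whenever $\mathbf{v}_i^\top\mathbf{x}\neq 0$ and $\mathbf{u}_i$ is not a constant vector.
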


\begin{proof}
We proceed by direct differentiation.
For
\[
p_k = \frac{e^{z_k}}{\sum_j e^{z_j}},
\]
Then
\begin{align*}
\frac{\partial p_k}{\partial z_j}
&=
\begin{cases}
\displaystyle
\frac{e^{z_k}}{\sum_{j} e^{z_j}}
-
\frac{e^{2z_k}}{\left(\sum_{j} e^{z_j}\right)^2}
= p_k(1 - p_k),
& k = j, \\[8pt]
\displaystyle
-\frac{e^{z_k} e^{z_j}}{\left(\sum_{j} e^{z_j}\right)^2}
= -p_k p_j,
& k \neq j .
\end{cases}
\end{align*}
Consequently,
\begin{align*}
\frac{\partial \mathbf{p}}{\partial \mathbf{z}}
&=
\mathrm{Diag}(\mathbf{p}) - \mathbf{p}\mathbf{p}^\top .
\end{align*} where $\mathbf{p} = [p_1, \ldots, p_n]^\top$ and $\mathrm{Diag}(\mathbf{p})$ constructs a diagonal matrix with $\mathbf{p}$ as its diagonal. The loss can be written as

\begin{align*}
\ell
&=
-\sum_{k=1}^n \hat p_k \log p_k .
\end{align*}
Its derivative with respect to $p_k$ is
\[
\frac{\partial \ell}{\partial p_k}
=
-\frac{\hat p_k}{p_k}.
\]
Therefore,
\[
\frac{\partial \ell}{\partial \mathbf{p}}
=
-(\mathbf{\hat p} \oslash \mathbf{p})^\top .
\]
The derivative of $\mathbf{z}$ with respect to $\sigma_i$ is
\begin{align*}
\frac{\partial \mathbf{z}}{\partial \sigma_i}
&=
\mathbf{u}_i \mathbf{v}_i^\top \mathbf{x} .
\end{align*}
Applying the chain rule,

\begin{align*}
\frac{\partial \ell}{\partial \sigma_i}
&=
\frac{\partial \ell}{\partial \mathbf{p}}
\,
\left(\frac{\partial \mathbf{p}}{\partial \mathbf{z}}\right)^\top
\,
\frac{\partial \mathbf{z}}{\partial \sigma_i} \\
&=
-(\mathbf{\hat p} \oslash \mathbf{p})^\top
(\mathrm{Diag}(\mathbf{p}) - \mathbf{p}\mathbf{p}^\top)
(\mathbf{u}_i \mathbf{v}_i^\top \mathbf{x}) \\
&=
-\big(\mathbf{\hat p}^\top - \|\mathbf{\hat p}\|_1 \mathbf{p}^\top\big)
(\mathbf{u}_i \mathbf{v}_i^\top \mathbf{x}) .
\end{align*}
As $\|\hat{\mathbf{p}}\|_1 = 1$, this simplifies to
\begin{align*}
\frac{\partial \ell}{\partial \sigma_i}
&=
-(\mathbf{\hat p}^\top - \mathbf{p}^\top)
(\mathbf{u}_i \mathbf{v}_i^\top \mathbf{x}) .
\end{align*}
Differentiating once more,
\begin{align*}
\frac{\partial^2 \ell}{\partial \sigma_i^2}
&=
\frac{\partial}{\partial \mathbf{z}}
\left(
\frac{\partial \ell}{\partial \sigma_i}
\right)
\cdot
\frac{\partial \mathbf{z}}{\partial \sigma_i} \\
&=
\frac{\partial}{\partial\mathbf{p}}
\left(
\frac{\partial \ell}{\partial \sigma_i}
\right)
\cdot
\left(\frac{\partial \mathbf{p}}{\partial \mathbf{z}}\right)^\top
\cdot
\frac{\partial \mathbf{z}}{\partial \sigma_i} \\
&=
(\mathbf{u}_i \mathbf{v}_i^\top \mathbf{x})^\top
(\mathrm{Diag}(\mathbf{p}) - \mathbf{p}\mathbf{p}^\top)
(\mathbf{u}_i \mathbf{v}_i^\top \mathbf{x}) .
\end{align*}
Factoring out the scalar term $(\mathbf{v}_i^\top \mathbf{x})^2$, we obtain
\begin{align*}
\frac{\partial^2 \ell}{\partial \sigma_i^2}
&=
(\mathbf{v}_i^\top \mathbf{x})^2\,
\mathbf{u}_i^\top
(\mathrm{Diag}(\mathbf{p}) - \mathbf{p}\mathbf{p}^\top)
\mathbf{u}_i .
\end{align*}
Next,
\begin{align*}
\mathbf{u}_i^\top
(\mathrm{Diag}(\mathbf{p}) - \mathbf{p}\mathbf{p}^\top)
\mathbf{u}_i
&=
\mathbf{u}_i^\top \mathrm{Diag}(\mathbf{p}) \mathbf{u}_i
-
\mathbf{u}_i^\top \mathbf{p}\mathbf{p}^\top \mathbf{u}_i \\[4pt]
&=
\sum_{k} p_k \, u_{ik}^2
-
(\mathbf{p}^\top \mathbf{u}_i)^2 \\[4pt]
&=
\sum_{k} p_k \, u_{ik}^2
-
2(\mathbf{p}^\top \mathbf{u}_i)^2
+
(\mathbf{p}^\top \mathbf{u}_i)^2 \\[4pt]
&=
\sum_{k} p_k \, u_{ik}^2
-
2 \sum_{k} p_k u_{ik} (\mathbf{p}^\top \mathbf{u}_i)
+
(\mathbf{p}^\top \mathbf{u}_i)^2 .
\end{align*}
Further, because $\sum\limits_{k} p_k = 1$,
\begin{align*}
\mathbf{u}_i^\top
(\mathrm{Diag}(\mathbf{p}) - \mathbf{p}\mathbf{p}^\top)
\mathbf{u}_i
&=
\sum_{k} p_k \, u_{ik}^2
-
2 \sum_{k} p_k u_{ik} (\mathbf{p}^\top \mathbf{u}_i)
+
\sum_{k} p_k (\mathbf{p}^\top \mathbf{u}_i)^2 \\[4pt]
&=
\sum_{k} p_k
\Big[
u_{ik}^2
-
2 u_{ik} (\mathbf{p}^\top \mathbf{u}_i)
+
(\mathbf{p}^\top \mathbf{u}_i)^2
\Big] \\[4pt]
&=
\sum_{k} p_k
\big(
u_{ik} - \mathbf{p}^\top \mathbf{u}_i
\big)^2.
\end{align*}

Because \(p_k \ge 0\) and 
\(\left(u_{ik}-\mathbf{p}^\top\mathbf{u}_i\right)^2 \ge 0\) for every \(k\), it follows that
\[
\mathbf{u}_i^\top
(\mathrm{Diag}(\mathbf{p}) - \mathbf{p}\mathbf{p}^\top)
\mathbf{u}_i
=
\sum_k p_k
\left(u_{ik}-\mathbf{p}^\top\mathbf{u}_i\right)^2
\geq 0.
\]
Therefore,
\begin{align*}
\frac{\partial^2 \ell}{\partial \sigma_i^2}
&=
(\mathbf{v}_i^\top \mathbf{x})^2
\mathbf{u}_i^\top
(\mathrm{Diag}(\mathbf{p}) - \mathbf{p}\mathbf{p}^\top)
\mathbf{u}_i
\;\ge\; 0.
\end{align*}


\end{proof}

\section{Hessian Spectrum in the Singular-Value Space}
\label{sec:sv_hessian_spectrum}
In BSI, each linear weight matrix is reparameterized in terms of its singular-value components. Here, we provide empirical evidence that Assumption~\ref{assump:spectral-power-law} also holds for the Hessian in this reparameterized singular-value space. As an example, we use Llama 2-7B finetuned on a mathematical reasoning dataset~\citep{metamath}. We first reparameterize each weight matrix using SVD. To make the experiment computationally feasible, we fix the smaller singular values and allow only the top 128 singular values of each matrix to vary. This avoids constructing the Hessian with respect to all singular values, which is infeasible for large models due to memory constraints.

To further reduce computational cost, we adopt a layer-wise block-diagonal approximation: we compute Hessian entries only between singular values associated with the same layer and set cross-layer entries to zero. Such block-diagonal curvature approximations are widely used in scalable second-order methods for deep neural networks, where curvature is computed within each layer or block rather than across all parameter pairs~\citep{zhang2018block, martens2015optimizing}. 
We then compute the eigenvalues of the resulting Hessian approximation, sort them by magnitude in descending order, and plot the spectrum on log-log axes. 

Figure~\ref{fig:sv-hessian-spectrum} shows the resulting spectrum. The bulk of the spectrum is consistent with Assumption~\ref{assump:spectral-power-law}: later eigenvalue magnitudes are empirically bounded by a power-law envelope. This is also aligned with prior observations of power-law-type decay in Hessian spectra with respect to the original network weights~\citep{tang2025investigating}. The sharper drop near the tail may be attributed to numerical artifacts, as eigenvalues that are theoretically zero can appear as small nonzero values due to finite-precision errors in the eigendecomposition.

\begin{figure}[H]
\centering
  \centering
  \includegraphics[width=0.55\linewidth]{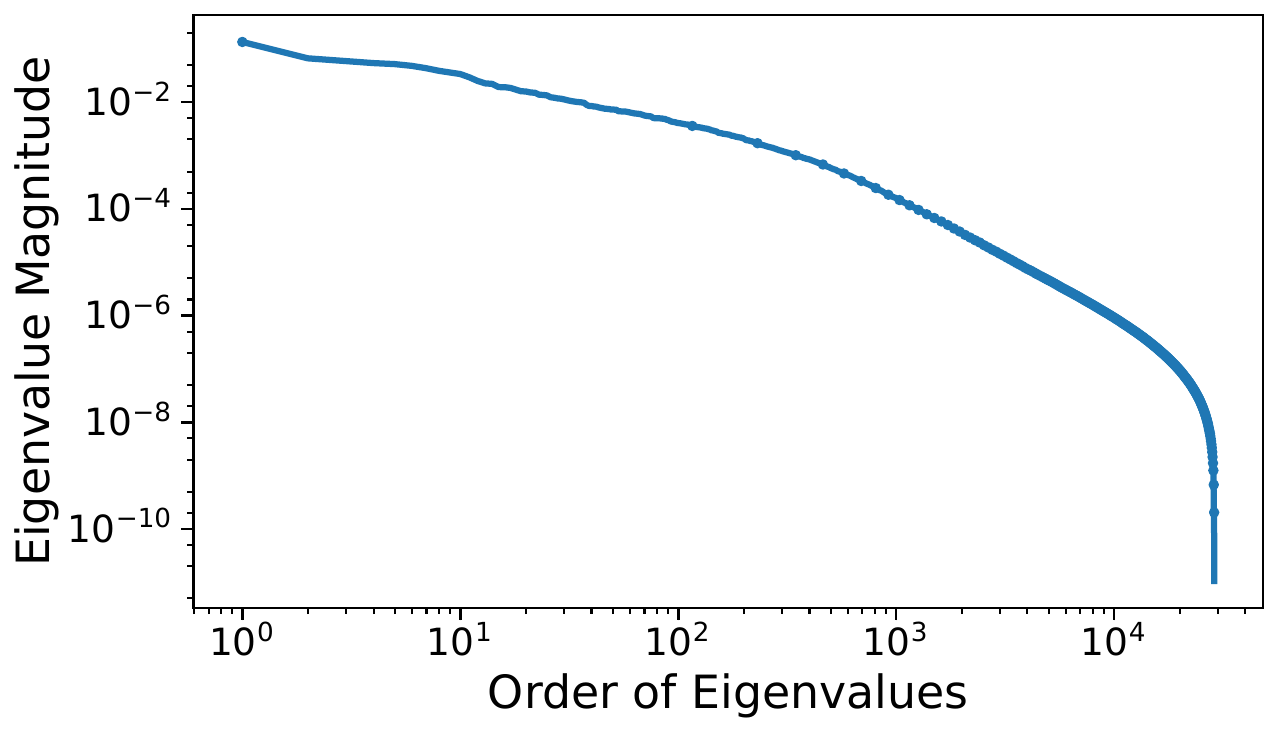}
\caption{Hessian eigenvalue spectrum in the reparameterized singular-value space for math-finetuned Llama 2-7B.}
\label{fig:sv-hessian-spectrum}
\end{figure}


\section{BSI Practical Design Choices
\label{appendx:bsi_implementation_choices}}
This appendix section describes the practical design choices needed for a numerically stable and computationally
efficient implementation of BSI in large-scale models.
Specifically, it addresses (i) the selection of perturbation intensity for
reliable Hessian-diagonal estimation under finite precision, and (ii) the use
of a candidate pool to limit the computational cost of Hessian estimation during pruning.

\subsection{Choosing Perturbation Intensity \label{appendix:perturbation-intensity}} 

The Hessian-diagonal estimator used in the importance score ${I}_i$
relies on finite-difference perturbations of the singular values.
The practical accuracy of this estimator therefore depends critically on the
choice of the perturbation intensity $\epsilon$.
The perturbation magnitude must be chosen
to ensure numerical stability under finite-precision arithmetic.

We consider the floating-point representation of singular value $\sigma_i$,
\begin{align*}
\sigma_i
&= (-1)^{\text{sign}} \cdot (1+\text{fraction}) \cdot 2^{h},
\end{align*}
where the fraction is represented using $f$ bits and $\sigma_i$ denotes the singular value that we perturb when computing the Hessian estimate in Theorem~\ref{thm:hessian_diag_est}.
The rounding error incurs when perturbing $\sigma_i$ by $\epsilon$
satisfies
\begin{equation}
\label{eqn:error_perturbation}
\mathrm{Error}[\sigma_i+\epsilon]
\le \frac{1}{2}\,2^{h-f}.
\end{equation}
As
\(
2^{h}
< |\sigma_i|
< 2^{h+1},
\)
we have
\begin{align*}
\frac{1}{2}|\sigma_i|
&< 2^{h}
< |\sigma_i|.
\end{align*}
Substituting this bound in Eq.~\eqref{eqn:error_perturbation} yields
\begin{align*}
\mathrm{Error}[\sigma_i+\epsilon]
&\le \frac{1}{2}\,2^{-f}\,|\sigma_i|.
\end{align*}
Let $\alpha$ denote the relative error tolerance:
\begin{equation}
\label{eqn:perturb-error}
\frac{\mathrm{Error}(\sigma_i+\epsilon)}{\epsilon}
\le \alpha .
\end{equation}
We need to ensure the largest possible rounding error still satisfies Eq. (\ref{eqn:perturb-error}). 
Therefore,
\[
\frac{\tfrac{1}{2}\,2^{-f}\,|\sigma_i|}{\epsilon}
\le \alpha .
\]
Hence
\begin{align*}
\epsilon
&\ge \frac{1}{2\alpha}\,2^{-f}\,|\sigma_i| \\[4pt]
&= \frac{1}{\alpha}\,2^{-(f+1)}\,|\sigma_i|.
\end{align*}

At the same time, the perturbation intensity must remain sufficiently small to avoid
contamination induced by higher order terms. We therefore bound $\epsilon$ by $\epsilon_{max} < 1$ and choose:
\[
\epsilon
\;=\;
\min\!\left(
\frac{1}{\alpha}\,2^{-(f+1)}\, \sigma_{\max},
\;\epsilon_{max}
\right),
\]
where $\sigma_{\max}$ denotes the largest singular value across all layers of the model and $\epsilon_{\max}$ ensures the perturbation remains small enough for reliable estimation. 
This setting yields a perturbation intensity suitable for all singular values in the model. Using a uniform perturbation intensity reduces estimation variance by alleviating the off-diagonal noise induced by the large singular values.

\subsection{Candidate Pool for Hessian Estimate \label{sec:candidate_pool}} Estimating the Hessian diagonal for all singular values at every pruning step is computationally expensive for large models. To address this, we restrict Hessian estimation to a dynamically constructed \emph{candidate pool} consisting of bases that are likely to be dropped at each pruning round. Formally, let $\mathcal{A}$ denote the set of currently active bases, and let $T$ be the total number of pruning rounds. We first identify the set of most significant bases $\mathcal{K} \subset \mathcal{A}$ as the minimal subset satisfying:
\begin{equation}
\label{eqn:keeping_set}
\sum_{i \in \mathcal{K}} \sigma_i \ge \rho_{\mathrm{cand}} \sum_{j \in \mathcal{A}} \sigma_j,
\qquad
\rho_{\mathrm{cand}} = (\texttt{KeepRatio})^{\gamma / T},
\end{equation}
where  $\texttt{KeepRatio}$ is the final target retention ratio of the model parameters,  and $\gamma>0$ controls the size of the candidate pool. $\rho_{\mathrm{cand}}$ is the  cumulative
sum threshold that we use to form the candidate pool.  At a given pruning round, the \emph{candidate pool} $\mathcal{C}$ is defined as the set of remaining bases:
\begin{equation*}
\label{eqn:candidate_pool}
\mathcal{C} = \mathcal{A} \setminus \mathcal{K}
\end{equation*}
Hessian diagonals are estimated only for singular values corresponding to the bases in the candidate pool. The size of this pool is given by the cardinality $|\mathcal{C}| = |\mathcal{A}| - |\mathcal{K}|$, where $|\mathcal{K}|$ is the number of bases required to satisfy the cumulative sum threshold.

\section{BSI Application \label{appendix:application-BSI}}
\subsection{Evaluation of BSI on Mathematical Reasoning\label{appendix:math-results}}
This section presents results on the performance of models compressed using BSI on mathematical reasoning tasks. We report performance under varying compression ratios. All experiments are performed on NVIDIA A100 GPUs. As Table~\ref{tab:maths-7b} shows, BSI outperforms state-of-the-art low-rank compression baselines at deep compression.

\begin{table}[H]
\centering
\caption{
Accuracy and model size of Llama 2-7B compressed with various low-rank decomposition algorithms on the mathematical reasoning tasks.}
\small

\begin{tabular}{llrrrrrr}
\toprule

\multirow{3}{*}{SVD} 
& \multicolumn{1}{l}{Model Size (Billion)} 
& 6.74 & 5.03 & 3.18 & 1.73 & 1.11 & 0.56   \\\cmidrule{2-8}
& \multicolumn{1}{l}{GSM8K Acc (\%)} 
& 66.41 & 63.00 & 60.96 & 53.90 & 32.90 & 11.90   \\
& \multicolumn{1}{l}{MATH Acc (\%)} 
& 20.56 & 18.30 & 17.38 & 13.74 & 5.34 & 2.76   \\\midrule

\multirow{3}{*}{FWSVD} 
& \multicolumn{1}{l}{Model Size (Billion)} 
& 6.74 & 4.79 & 2.95 & 1.54 & 0.96 & 0.47   \\\cmidrule{2-8}
& \multicolumn{1}{l}{GSM8K Acc (\%)} 
& 66.41 & 62.70 & 62.55 & 56.48 & 1.52 & 1.90   \\
& \multicolumn{1}{l}{MATH Acc (\%)} 
& 20.56 & 19.18 & 17.62 & 14.20 & 1.76 & 1.48   \\\midrule

\multirow{3}{*}{ASVD} 
& \multicolumn{1}{l}{Model Size (Billion)} 
& 6.74 & 4.09 & 3.43 & 2.77 & 1.45 & 0.79  \\\cmidrule{2-8}
& \multicolumn{1}{l}{GSM8K Acc (\%)} 
& 66.41 & 61.94 & 60.88 & 59.74 & 50.49 & 29.11   \\
& \multicolumn{1}{l}{MATH Acc (\%)} 
& 20.56 & 18.74 & 18.72 & 17.30 & 12.62 & 5.18   \\\midrule

\multirow{3}{*}{BASEL} 
& \multicolumn{1}{l}{Model Size (Billion)} 
& 6.74 & 5.11 & 3.27 & 1.82 & 1.20 & 0.41 \\\cmidrule{2-8}
& \multicolumn{1}{l}{GSM8K Acc (\%)} 
& 66.41 & 62.55 & 61.03 & 56.86 & 50.34 & 7.58 \\
& \multicolumn{1}{l}{MATH Acc (\%)} 
& 20.56 & 19.50 & 18.62 & 16.20 & 13.18 & 2.72 \\\midrule

\multirow{3}{*}{BSI} 
& \multicolumn{1}{l}{Model Size (Billion)} 
& 6.74 & 4.52 & 2.85 & 1.59 & 0.51 & 0.34 \\\cmidrule{2-8}
& \multicolumn{1}{l}{GSM8K Acc (\%)} 
& 66.41 & 62.85 & 60.35 & 56.71 & 35.03 & 17.66 \\
& \multicolumn{1}{l}{MATH Acc (\%)} 
& 20.56 & 19.44 & 18.08 & 15.72 & 7.78 & 4.10 \\

\bottomrule
\end{tabular}

\label{tab:maths-7b}
\end{table}



\subsection{Ablation Study}
\label{app:hessian_ablation}

Figures~\ref{fig:BSI_w_o_hessian_7b-math}(a) and~\ref{fig:BSI_w_o_hessian_7b-math}(b) report the ablation results for Llama 2-7B on GSM8K and Hendrycks MATH, respectively. The gradient-only variant (BSI w/o Hessian) uses the same compression pipeline as BSI but removes the Hessian term from the importance metric. The proposed BSI method achieves better performance than the gradient-only variant, indicating that the Hessian term improves the quality of basis selection.

\begin{figure}[H]
\centering
\begin{minipage}[b]{0.48\textwidth}
  \centering
  \includegraphics[width=\linewidth]{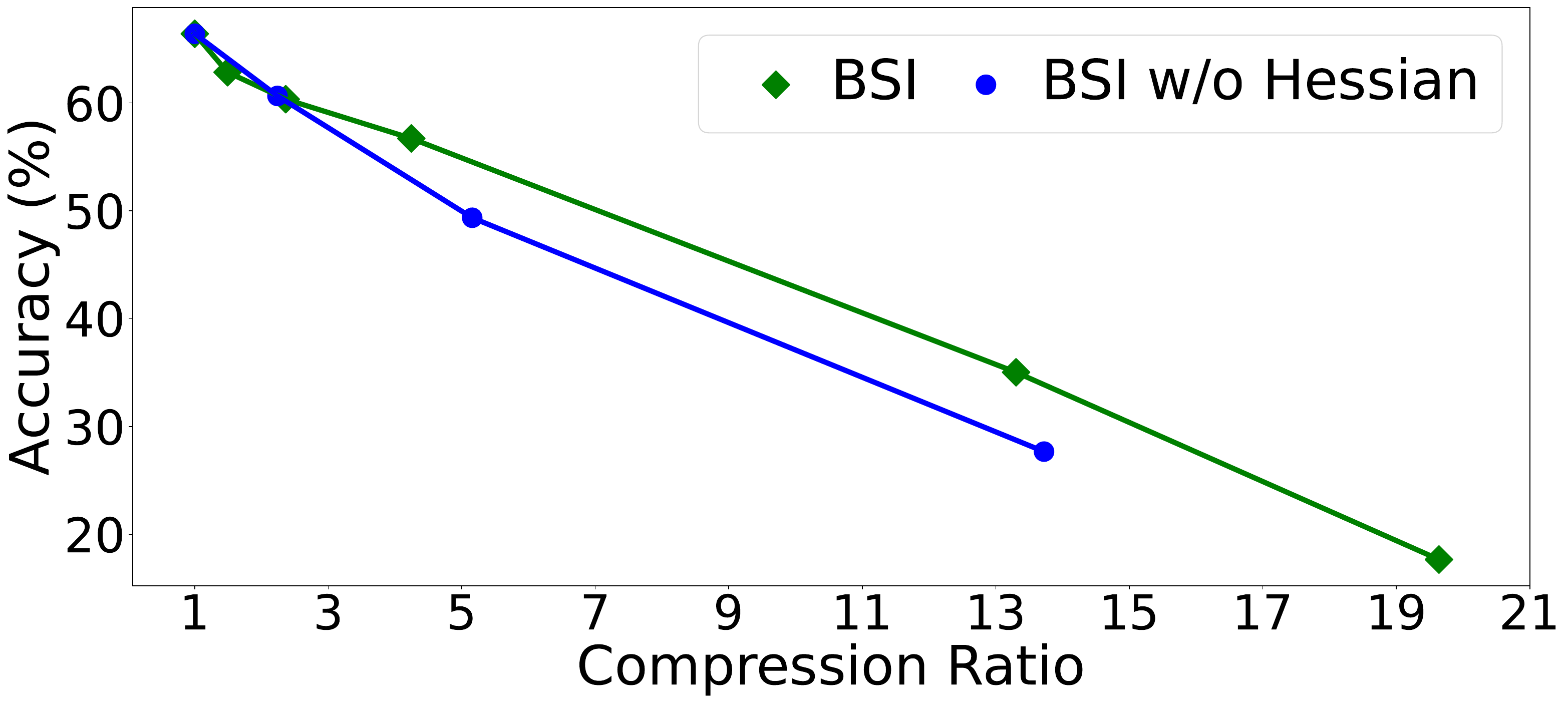}\\
  \small (a) GSM8K
\end{minipage}\hfill
\begin{minipage}[b]{0.48\textwidth}
  \centering
\includegraphics[width=\linewidth]{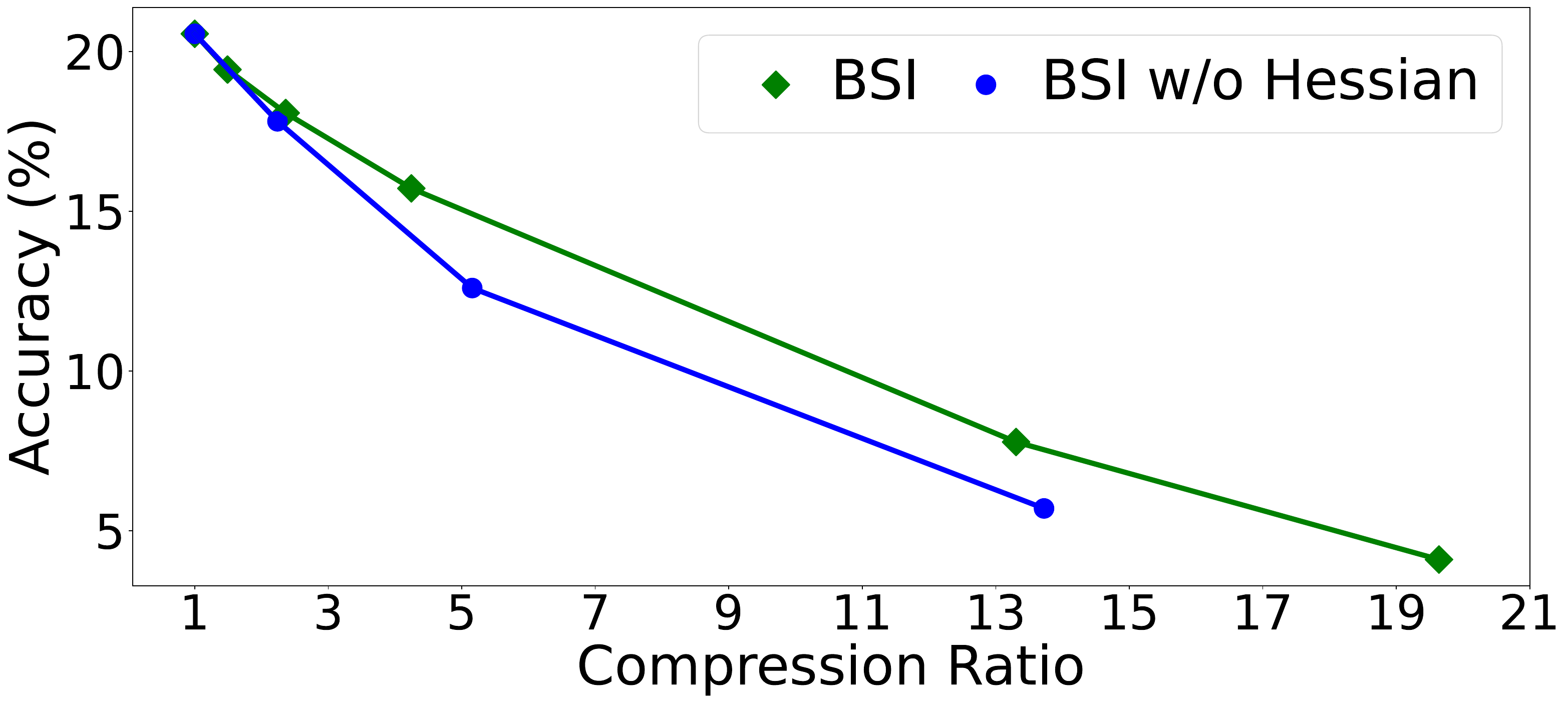}\\
  \small (b) MATH
\end{minipage}
\caption{Accuracy and model size of Llama 2-7B compressed using BSI and its variant without the Hessian term on mathematical reasoning tasks. 
Exact values are shown in Tables~\ref{tab:maths-7b} and \ref{tab:hessian_ablation}.
}
\label{fig:BSI_w_o_hessian_7b-math}
\end{figure}

\begin{table}[H]
\centering
\caption{
Accuracy and model size of Llama 2-7B compressed with BSI without the Hessian term on the mathematical reasoning tasks.}
\small

\begin{tabular}{llrrrr}
\toprule
\multirow{3}{*}{BSI w/o Hessian} 
& \multicolumn{1}{l}{Model Size (Billion)} 
& 6.74 & 3.00 & 1.30 & 0.49 \\
\cmidrule{2-6}
& \multicolumn{1}{l}{GSM8K Acc (\%)} 
& 66.41 & 60.65 & 49.36 & 27.67 \\
& \multicolumn{1}{l}{MATH Acc (\%)} 
& 20.56 & 17.82 & 12.60 & 5.70 \\
\bottomrule
\end{tabular}

\label{tab:hessian_ablation}
\end{table}

\end{document}